\newtheorem{thm}{Theorem}
\newtheorem{lem}{Lemma}
\newtheorem{define}{Definition}
\newtheorem{cor}{Corollary}
\def\x{{\bf x}}
\def\v{{\bf v}}
\def\m{{\bf m}}
\def\n{{\bf n}}
\def\u{{\bf u}}
\def\e{{\bf e}}
\newcommand{\aaa}{{\bf a}}
\newcommand{\bbb}{{\bf b}}
\newcommand{\1}{\mathbbm{1}^{n\times n}}
\newcommand{\one}{\mathbbm{1}}
\newcommand{\ri}{\right>}
\newcommand{\li}{\left<}
\newcommand{\su}{{\text{sum}}}
\newcommand{\la}{\lambda}
\newcommand{\A}{{\bf{A}}}
\newcommand{\M}{\mathbf{M}}
\newcommand{\MM}{\mathcal{M}}
\newcommand{\N}{\mathbf{N}}
\newcommand{\PPP}{{\bf{P}}}
\newcommand{\U}{{\bf{U}}}
\newcommand{\V}{{\bf{V}}}
\newcommand{\R}{\mathbb{R}}
\newcommand{\Rr}{\mathcal{R}}
\newcommand{\Aa}{\mathcal{A}}
\newcommand{\C}{\mathcal{C}}
\newcommand{\G}{\mathcal{G}}
\newcommand{\HH}{\mathcal{H}}
\newcommand{\E}{\mathbb{E}}
\newcommand{\X} {{\bf{X}}}
\newcommand{\s}{\star}
\newcommand{\eps}{\epsilon}
\newcommand{\z}{{\bf z}}
\newcommand{\beq}{\begin{equation}}
\newcommand{\eeq}{\end{equation}}
\newcommand{\bear}{\begin{align}}
\newcommand{\eear}{\end{align}}
\newcommand{\bea}{\begin{eqnarray}}
\newcommand{\eea}{\end{eqnarray}}
\newcommand{\Prob}{\ensuremath{\mathbb{P}}}
\long\def\symbolfootnote[#1]#2{\begingroup%
\def\thefootnote{\fnsymbol{footnote}}\footnote[#1]{#2}\endgroup}
\begin{document}

\title{Finding Dense Clusters via ``Low Rank $+$ Sparse'' Decomposition}

\author{Samet Oymak, Babak Hassibi\vspace{10pt}\\
 California Institute of Technology\vspace{2pt} \\ \{soymak,hassibi\}@caltech.edu
\thanks{This work was supported in part by the National Science Foundation under grants CCF-0729203, CNS-0932428 and CCF-1018927, by the Office of Naval Research under the MURI grant N00014-08-1-0747, and by Caltech's Lee Center for Advanced Networking.}
}
\maketitle

\begin{abstract}
Finding ``densely connected clusters'' in a graph is in general an important and well studied problem in the literature \cite{Schaeffer}. It has various applications in pattern recognition, social networking and data mining \cite{Duda,Mishra}. Recently, Ames and Vavasis have suggested a novel method for finding cliques in a graph by using convex optimization over the adjacency matrix of the graph \cite{Ames, Ames2}. Also, there has been recent advances in decomposing a given matrix into its ``low rank'' and ``sparse'' components \cite{Candes, Chandra}. In this paper, inspired by these results, we view ``densely connected clusters'' as imperfect cliques, where imperfections correspond missing edges, which are relatively sparse. We analyze the problem in a probabilistic setting and aim to detect disjointly planted clusters. Our main result basically suggests that, one can find \emph{dense} clusters in a graph, as long as the clusters are sufficiently large. We conclude by discussing possible extensions and future research directions.
\end{abstract}

%
%
%
%
%
%
\section{Introduction}
Recently, convex optimization methods have become increasingly popular for data analysis. For example, in compressed sensing \cite{Candes2}, we observe the measurements and aim to recover an unknown sparse solution of a system of linear equations via $\ell_1$ minimization. In many other cases, we have the perfect knowledge of a signal which possibly looks complicated, however it has a simpler underlying structure and we aim to reveal this structure by decomposing it into meaningful pieces. For example, decomposing a signal into a sparse superposition of sines and spikes is one of the well-known problems of this type \cite{Candes3}. Decomposing a matrix into low rank and sparse components is another key problem of this nature and it has recently been studied in various settings \cite{Candes,Chandra,Sengha,Ganesh}. In this problem, we observe the matrix $L^0+S^0$, where $L^0$ is low rank and $S^0$ is sparse, and we aim to find $L^0$ and $S^0$. The suggested convex optimization program is as follows:
\begin{align}
\label{optim1}
&\min \|L\|_\s+\lambda \|S\|_1\\
\nonumber&\text{subject to}\\
\nonumber&\hspace{20pt}L+S=L^0+S^0
\end{align}
Here $\|\cdot\|_\s$ is the nuclear norm i.e. sum of the singular values of a matrix and $\|\cdot\|_1$ is the $\ell_1$ norm, i.e., the sum of the absolute values of the entries. Problem (\ref{optim1}) can be considered as the natural convex relaxation of ``low rank $+$ sparse'' decomposition as $\ell_1$ norm and nuclear norm are the tightest convex relaxations of the sparsity and rank functions respectively. Consequently, this program promotes sparsity for $S$ and low rankness for $L$. For the correct choice of $\lambda$, if $L^0$ and $S^0$ satisfies certain incoherence requirements, it is known that we'll have $(L^*,S^*)=(L^0,S^0)$ where $(L^*,S^*)$ is output of problem (\ref{optim1}).

This result is actually very useful as low rankness and sparsity are the underlying structures in many problems. In \cite{Venkat}, Gaussian graphical models with latent variables were investigated and the problem of finding conditional dependencies of the observed variables was connected to problem (\ref{optim1}). On the other hand, in the problem of finding cliques in a given unweighted graph, the key observation is the fact that, in the adjacency matrix, a clique corresponds to a submatrix of all $1$'s which is clearly rank $1$. Based on these observations, in this paper, we aim to extend the results of Ames and Vavasis \cite{Ames,Ames2} for detection of the planted cliques to detection of ``densely connected clusters''. This problem comes up naturally, as most of the times, it might be unreasonable to expect full-cliques in a graph. For example, there might be missing edges naturally, or data might be corrupted or we might be observing only partial information. However, even if we miss some of the edges, it is very likely that most of the edges will be preserved and the cluster, we want to identify, will still be \emph{denser} than the rest. We'll view these dense clusters as imperfect cliques with some missing edges and, in our approach, full cliques will correspond to low rank piece, $L^0$, whereas the missing edges inside and (extra) edges outside of the clusters will correspond to sparse piece, $S^0$.

We analyze the problem under a general probabilistic setting, which we call ``probabilistic cluster model''. In our model, an edge inside $i$'th cluster exists with probability $p_i$ and an edge which is not inside any of the clusters exists with probability $q$ independent of other edges in the graph, where $1\geq p_i>q\geq 0$ are constant. Here, by ``inside a cluster'', we mean an edge lying between two nodes which belong to the same cluster. Notice that, this model can be viewed as a slight modification of well-known Erd\"os-R\'enyi random graph model where we introduce a nonuniform distribution which makes the clusters identifiable. We additionally assume the clusters are disjoint.

We'll analyze two convex programs for detection of the clusters using the knowledge of the graph. We name the first program ``blind approach'' and it is just a slight modification to problem (\ref{optim1}), given in (\ref{optim3}), and we show that if 
\beq
\min_ip_i=p_{min}>\frac{1}{2}>q
\eeq as long as the clusters are sufficiently large, with high probability, problem (\ref{optim3}) can detect the clusters. Our second program is called ``intelligent approach'' which is given in problem (\ref{optim2}). In this case, we require an extra information but we can guarantee the detection for any $p_{min}>q$. Problem (\ref{optim2}) can be considered as a mixture of (\ref{optim1}) and (12) of \cite{Ames} because it focuses on the subgraph induced by the edges inside the clusters similar to (12) of \cite{Ames} but additionally accounts for the missing edges. This approach also trivially extends to the case where we observe the partial graph, in which, each edge is observed with same probability independent of others. In this case, clusters can still be recovered but we need clusters to be slightly larger compared to the case we observe the full graph.

%
%
%
%
%
%
%
\section{Basic Definitions and Notations}
Let $[m]$ denote the set $\{1,2,\dots,m\}$ for all integers $m\geq 1$. We differentiate a subset of nodes in a graph by calling that subset a cluster. For the rest of the paper, we assume the graph $\G$ is unweighted with $n$ nodes, and there are $t$ {{\bf{disjoint}}} planted clusters with sizes $\{k_i\}_{i=1}^t$ nodes. By unweighted we mean edges do not carry weights. Assume nodes are labeled from $1$ to $n$ and let $\C_i$ be the set of the nodes inside the cluster hence $\C_i\subseteq[n]$, $|\C_i|=k_i$ and $\C_i\cap\C_j=\emptyset$ for any $i\neq j$. We also let $\C_{t+1}$ denote rest of the nodes i.e. $\C_{t+1}=[n]-\bigcup_{i=1}^t \C_i$ and $k_{t+1}=n-\sum_{i=1}^tk_i$.

We call a subset $\beta$ of $[c]\times [d]$, a \emph{region}. $\beta^c$ denotes the complement, which is given by $\beta^c=[c]\times [d]-\beta$.

Let $\Rr$ be the region corresponding to the union of regions induced by the clusters, i.e., $\Rr=\bigcup_{i=1}^t \C_i\times \C_i$. Note that $\Rr$ is simply a subset of $[n]\times[n]$. We also let $\Rr_{i,j}=\C_i\times \C_j$ for $1\leq i,j\leq t+1$. $\{\Rr_{i,j}\}$ basically divides $[n]\times [n]$ into $(t+1)^2$ disjoint regions similar to a grid. Also $\Rr_{i,i}$ is simply the region induced by $i$'th cluster for any $i\leq t$.

Let $a,b\in\R$ and $0\leq r\leq 1$. We say a random variable $X$ is $\text{Bern}(a,b,r)$ if
\begin{align}
&\Prob(X=a)=r\\
&\Prob(X=b)=1-r
\end{align}

For a given matrix $\X$, $\X_{i,j}=(\X)_{i,j}$ denotes the entry lying on $i$'th row and $j$'th column. $\one^{c\times d}$ is a $c\times d$ matrix where entries are all $1$'s. Assume $\beta$ is a subset of $[c]\times [d]$. Then, $\beta$ can be viewed as a set of coordinates and if $\X\in\R^{c\times d}$, we denote the matrix which is induced by entries of $\X$ on $\beta$ by $\X_\beta$:
\beq
(\X_\beta)_{i,j}=\begin{cases}\X_{i,j}~~\text{if}~(i,j)\in \beta\\
0~~\text{else}
\end{cases}
\eeq
In particular, $\one^{c\times d}_\beta$ is a matrix, whose entries on $\beta$ are $1$ and rest of the entries are $0$. 

Now, we introduce some definitions to explain the model we'll work on.

\begin{define} [Random Support] \label{support}A random set $\beta\subseteq [c]\times [d]$ is called ``random support'' with parameter $0\leq r\leq 1$ if each coordinate $(i,j)\in[c]\times[d]$ is an element of $\beta$ with probability $r$, independent of other coordinates.

A random set $\Gamma\in[c]\times[d]$ is called ``corrected random support'' with parameter $r$ if it is statistically identical to $\beta\cup \bigcup_{i=1}^{\min\{c,d\}} (i,i)$ where $\beta$ is a random support with parameter $r$. Basically, we include the diagonal coordinates.
\end{define}

Let $\A$ be the adjacency matrix of $\G$. For simplicity, we let $\A_{i,i}=1$ for all $i\in[n]$. Also for $(i,j)\in[n]\times[n],~i\neq j$
\beq
\A_{i,j}=\begin{cases}1~~\text{if an edge exists between nodes}~i,j\\0~~\text{else}\end{cases}
\eeq
Note that, $\A$ is symmetric, i.e., $\A_{i,j}=\A_{j,i}$ for all $i,j\in[n]$, as a result, it is uniquely determined by the entries on the lower triangular part.

\begin{define}[Probabilistic Cluster Model]
\label{model}
Recall that $\C_i\subseteq [n]$ with $\C_i\cap \C_j=\emptyset$ and $|\C_i|=k_i$ for all $1\leq i\neq j\leq t$. Also $\Rr_{i,j}=\C_i\times \C_j$ for all $1\leq i,j\leq t+1$ and $\Rr=\bigcup_{i=1}^t \Rr_{i,i}$. Let $\{p_i\}_{i=1}^t,q$ be constants between $0$ and $1$. Then, a random graph $\G$, generated according to probabilistic cluster model, has the following adjacency matrix. Entries of $\A$ on the lower triangular part are independent random variables and for any $i> j$:
\begin{align}
\label{prob}
\A_{i,j}=\begin{cases}\text{Bern}(1,0,p_l)~\text{random variable if}~(i,j)\in\Rr_{l,l}~\text{for some}~l\leq t\\
\text{Bern}(1,0,q)~\text{random variable}~\text{else}
\end{cases}
\end{align}
\end{define}

Verbally, an edge inside $l$'th cluster exists with probability $p_l$ and an edge which is not inside any of the clusters exists with probability $q$, independent of other edges, where $1\geq \{p_l\}_{l=1}^t,q\geq 0$. In order to distinguish clusters we'll assume they are denser i.e. an edge inside the region $\Rr$ is more likely to exist compared to an edge which is not. Consequently, we have:
\beq
\label{pmineq}
\min_{i\leq t}p_i=p_{min}>q
\eeq
for the rest of the paper. One can similarly treat the case where $\max_{i\leq t} p_i=p_{max}<q$ by considering the complement graph $\HH$ whose adjacency $B_{ij}=1-A_{ij}$ for all $i\neq j$. In this case, $\HH$ will still satisfy probabilistic model with inside and outside cluster edge probability $\{1-p_i\}_{i=1}^t,1-q$ respectively where $\min_{i\leq t} 1-p_i>1-q$. Notice that, in the special case of cliques, we have $p_i=1$ for all $i\leq t$.

In this model, $\A$ can be characterized also by using random supports.
\beq
\A=\sum_{i=1}^t\1_{\Rr_{i,i}\cap \beta_i}+\1_{\Rr^c\cap\Gamma}
\eeq
where $\{\beta_i\},\Gamma$ are independent corrected random supports with parameters $\{p_i\},q$ respectively.

Let $\Aa\subseteq[n]\times[n]$ be the set of nonzero coordinates of $\A$, i.e., $\1_\Aa=\A$. Basically, $\Aa$ is the region induced by the edges inside the graph $\G$ with the addition of diagonal coordinates. For example, the set $\Aa^c\cap \Rr$ corresponds to the missing edges inside the clusters. Clearly, $\Aa$ is random, as $\G$ is drawn from probabilistic cluster model.

We'll call a matrix (or vector) positive (negative) if all its entries are positive (negative). Finally, we let $\su(\X)$ denote sum of the entries of $\X$ i.e. $\su(\X)=\sum_{i=1}^c\sum_{j=1}^d \X_{i,j}$ for $\X\in\R^{c\times d}$. If matrix $X$ is nonnegative then $\su(X)=\|X\|_1$.

%
%
%
%
%
%
%
\section{Proposed Convex Programs}
Our aim is finding the clusters $\{\C_i\}_{i=1}^t$ in a graph $\G$ drawn from the probabilistic cluster model described in Definition \ref{model}. This can be achieved by finding $\Rr$. This is not hard to see, because, in the matrix $\1_\Rr$, nonzero entries of each column will exactly correspond to one of the clusters, as clusters are disjoint. Then, we can simply scan through all columns to find the clusters.

\subsection{Blind Approach}
As our first approach, in order to find $\Rr$, we suggest the following, slightly modified version of problem (\ref{optim1})
\begin{align}
\label{optim3}
&\min_{L,S} \|L\|_\s+\lambda\|S\|_1\\
\nonumber&\text{subject to}\\
\label{line12}
&\hspace{15pt} 1\geq L_{i,j}\geq 0~~\text{for all}~i,j\\
\label{line22}
&\hspace{15pt} L+S=\A
\end{align}

Advantage of this approach is the fact that we don't need any additional information about clusters such as number (or sizes) of the clusters. The desired solution is $(L^0,S^0)$ where $L^0$ corresponds to the full cliques, when missing edges inside $\Rr$ are completed, and $S^0$ corresponds to the missing edges and the extra edges between the clusters. In particular we want:
\begin{align}
\label{optimal}
&L^0=\1_\Rr\\
&S^0=\1_{\Aa\cap \Rr^c}-\1_{\Aa^c\cap\Rr}
\end{align}
It is easy to see that the $(L^0,S^0)$ pair is feasible, later we'll argue that under correct assumptions $(L^0,S^0)$ is indeed unique optimal solution.

\subsection{Intelligent Approach}
The second convex problem to be analyzed is a mixture of problems (\ref{optim1}) and (12) of \cite{Ames}. We'll require an extra information which is the size of the region induced by clusters, i.e., $|\Rr|$. Suggested program focuses on subgraph induced by the edges inside the clusters and is given below:
\begin{align}
\label{optim2}
&\min_{L,S} \|L\|_\s+\lambda\|S\|_1\\
\nonumber&\text{subject to}\\
\label{line1}
&\hspace{15pt} 1\geq L_{i,j}\geq S_{i,j}\geq 0~~\text{for all}~i,j\\
\label{line2}
&\hspace{15pt} \text{trace}((\1-\A)^T(L-S))=0\\
\label{line3}
&\hspace{15pt} \su(L)\geq|\Rr|=\sum_{i=1}^t k_i^2
\end{align}
Actually, knowledge of $|\Rr|$,will help us guess the solution of problem (\ref{optim2}) (under the right assumptions). $L^0$ should correspond to the full cliques similar to (\ref{optim2}), however $S^0$ should only correspond to the missing edges inside the clusters. Formally, we want:
\begin{align}
\label{optimal2}
&L^0=\1_\Rr\\
&S^0=\1_{\Aa^c\cap\Rr}
\end{align}
In the next section, we'll state the main results of the paper regarding the problems \ref{optim3} and \ref{optim2}. Proofs of the theorems in section \ref{mainresult} will be given in sections \ref{intelproof}, \ref{blindproof} and \ref{secconv}. Finally, section \ref{concsec} will conclude the paper.

%
%
%
%
%
\section{Main Results}
\label{mainresult}
In this section, we'll explain the conditions for which the candidates given in (\ref{optimal}) and (\ref{optimal2}) are the unique optimal solutions of problems (\ref{optim3}) and (\ref{optim2}) respectively. This will also naturally answer the question of finding the densely connected clusters $\{\C_i\}_{i=1}^t$. Let $k_{min}$ be the size of the minimum cluster
\beq
k_{min}=\min_{1\leq i\leq t} k_i
\eeq
and $p_{min}$ was given previously in (\ref{pmineq}). Our analysis yields the following following fundamental constraints.
\begin{itemize}
\item $\la\sqrt{n}\leq C$ for some constant $C$ (In particular $\la=\frac{1}{2\sqrt{n}}$ will work).
\item $k_i> \frac{1}{\la(p_i-q)}$ for all $i\leq t$.
\end{itemize}
Actually, both of these constraints are natural. In \cite{Candes}, $\la=\frac{1}{\sqrt{n}}$ is used as the weight for problem (\ref{optim1}). It is not surprising that we are using a similar weight as our random graph model has strong similarities with the uniformly random support of the sparse component in \cite{Candes}. Secondly, we observe that $\la\leq \frac{C}{\sqrt{n}}$ implies $k_i>\frac{\sqrt{n}}{C(p_i-q)}$ which suggests that for recoverability, we need size of the $i$'th cluster to be at least $\Omega(\sqrt{n})$ and as $p_i-q$ gets smaller, this size should grow. This condition is consistent with the previous results of \cite{Ames,Ames2} which says for recoverability of $t$ disjoint cliques, one needs a minimum clique size of $\Omega(\sqrt{n})$.

The main results of this paper are summarized in the following theorems.
\begin{thm} [Main Result for Intelligent Approach] \label{mainthm} Set $\la=\frac{1}{2\sqrt{n}}$. Let $\G$ be a random graph generated according to the probabilistic cluster model (\ref{model}) with cluster sizes $\{k_i\}_{i=1}^t$ and parameters $\{p_i\},q$. Assume $p_{min}>q$ and $k_i\geq \frac{2}{\la(p_i-q)}=\frac{4\sqrt{n}}{p_i-q}$ for all $i\leq t$. Then, (independent of rest of the parameters) there exists constants $c,C>0$ such that as a result of convex program (\ref{optim2}) we have
\begin{align}
&L_0=\1_\Rr=\sum_{i=1}^t \1_{\Rr_{i,i}}\\
&S_0=\1_{\Aa^c\cap\Rr}
\end{align}
with probability at least (w.p.a.l.)
\beq
\label{concentrate1}
1-cn^2\exp(-C(p_{min}-q)^2k_{min})
\eeq
\end{thm}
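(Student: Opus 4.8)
The plan is to prove that the proposed pair $(L_0,S_0)=(\1_\Rr,\ \1_{\Aa^c\cap\Rr})$ is the \emph{unique} optimum of (\ref{optim2}) by producing an (approximate) dual certificate and supplementing it with the strict convexity of the two norms away from the ``expected'' tangent/support sets. Feasibility of $(L_0,S_0)$ is already recorded, so I would fix an arbitrary feasible perturbation $(L_0+E_L,\ S_0+E_S)$ and first read off the linear relations the constraints impose. The equality (\ref{line2}) forces $E_L=E_S$ on $\Aa^c$; the box constraints (\ref{line1}) force $E_L\le 0$ on $\Rr$ (where $L_0=1$) and $E_L\ge 0$, $E_S\ge 0$ on $\Rr^c$ (and $E_S\ge 0$ wherever $S_0=0$); and (\ref{line3}) forces $\su(E_L)\ge 0$. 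These sign and equality relations partition the argument over the four regions $\Rr\cap\Aa$, $\Rr\cap\Aa^c$, $\Rr^c\cap\Aa$, $\Rr^c\cap\Aa^c$, which are exactly where $(L_0,S_0)$ takes its four distinct values.

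Next I would lower bound the change in objective using subgradients. Write $U_0V_0^T=\sum_{i} k_i^{-1}\1_{\Rr_{i,i}}$ for the canonical part of the nuclear subgradient at $L_0$, let $T$ be the associated tangent space, let $\mathcal{P}_T,\mathcal{P}_{T^\perp}$ be the corresponding projections, and let $\Omega=\Aa^c\cap\Rr$ be the support of $S_0$. Convexity yields, for every $W\in T^\perp$ with operator norm $\|W\|_2\le 1$ and every $F$ supported on $\Omega^c$ with $\|F\|_\infty\le 1$,
\[
\|L_0+E_L\|_\s+\la\|S_0+E_S\|_1-\|L_0\|_\s-\la\|S_0\|_1\ \ge\ \langle U_0V_0^T+W,\,E_L\rangle+\la\langle \mathrm{sign}(S_0)+F,\,E_S\rangle,
\]
and one can moreover retain the strictly positive remainder $(1-\|W\|_2)\,\|\mathcal{P}_{T^\perp}E_L\|_\s+\la(1-\|F\|_\infty)\,\|(E_S)_{\Omega^c}\|_1$. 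The sum constraint contributes $\eta\,\su(E_L)\ge 0$ for free for any multiplier $\eta\ge 0$. The key structural remark is that the directions the linear term cannot control on its own, such as shrinking $L$ on the missing edges $\Rr\cap\Aa^c$, are exactly the directions on which these quadratic remainders are large, so the two mechanisms are complementary and must be combined rather than used separately.

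The crux is then to construct the certificate $(W,F)$ together with multipliers $\eta\ge 0$ for (\ref{line3}) and $\nu$ for (\ref{line2}) so that the linear functional above is nonnegative on every admissible $(E_L,E_S)$; equivalently, so that $U_0V_0^T+W-\eta\1+\nu\1_{\Aa^c}$ and $\la(\mathrm{sign}(S_0)+F)-\nu\1_{\Aa^c}$ reproduce, region by region, the complementary–slackness signs dictated by (\ref{line1}). I would build $W$ and $F$ explicitly from the centered random indicators $\1_{\Aa^c\cap\Rr_{i,i}}-(1-p_i)\1_{\Rr_{i,i}}$ and $\1_{\Aa\cap\Rr^c}-q\,\1_{\Rr^c}$, and then subtract a tangent-space correction to enforce $W\in T^\perp$ exactly, with the scalars $\eta,\nu$ pinned down by matching the on-support region $\Rr\cap\Aa^c$. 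The hard part, and the only place the hypotheses enter, is verifying \emph{dual feasibility} with high probability: $\|W\|_2<1$ and $\|F\|_\infty<1$ together with nonnegativity of the implied multipliers. Bounding $\|W\|_2$ reduces to an operator-norm estimate for a matrix with independent, mean-zero, bounded entries, whose typical size is $\Theta(\sqrt n)$; the choices $\la=\tfrac{1}{2\sqrt n}$ and $k_i\ge \tfrac{4\sqrt n}{p_i-q}$ are calibrated precisely so that the normalized norm falls below $1$. The entrywise and sign conditions reduce to showing each within-cluster observed degree concentrates near $p_i k_i$ and each cross term near $q\,k_i$: a Bernstein/Chernoff bound makes a single sum of $\sim k_{min}$ independent Bernoullis deviate by more than $\sim (p_{min}-q)$ with probability $\exp(-C(p_{min}-q)^2 k_{min})$, and a union bound over the at most $n^2$ such constraints produces the factor $cn^2\exp(-C(p_{min}-q)^2 k_{min})$ in (\ref{concentrate1}). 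I expect the operator-norm bound for $W$ \emph{after} the tangent correction to be the most delicate step, since $\mathcal{P}_T$ couples all clusters simultaneously and one must check it does not inflate the norm back past $1$.

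Finally, to upgrade optimality to uniqueness I would exploit the retained slack. At any optimal perturbation the chain of inequalities is tight, which forces $\mathcal{P}_{T^\perp}E_L=0$ and $(E_S)_{\Omega^c}=0$; that is, $E_L\in T$ and $E_S$ is supported on $\Omega$. Feeding this back into the sign and equality relations extracted from (\ref{line1})–(\ref{line3}), a short argument shows that the only feasible direction lying in the tangent space and supported on $\Omega$ is the trivial one, so $E_L=E_S=0$ and $(L_0,S_0)$ is the unique minimizer with the stated probability.
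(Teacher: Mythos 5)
Your architecture matches the paper's own proof to a large extent: reduce to a perturbation $(E^L,E^S)$ with the sign/equality relations you list, lower-bound the objective change by a linear functional built from $\sum_i k_i^{-1}\1_{\Rr_{i,i}}$ plus a dual certificate, construct that certificate from the centered indicators (this is exactly the paper's $W_0=\la(-q\1+\1_\Aa+\sum_i(q-p_i)\1_{\Rr_{i,i}})$), bound its spectral norm at scale $\la\sqrt n$, project onto $\MM_\U$ and control the entrywise projection error by Chernoff plus a union bound over $n^2$ entries (Lemmas \ref{sondanbir}, \ref{correction}, Theorem \ref{part2}). Your calibration of $\la=\frac{1}{2\sqrt n}$ against $k_i\ge\frac{4\sqrt n}{p_i-q}$ and the form of the failure probability are also right.

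The genuine gap is your endgame for perturbations with $\mathcal{P}_{T^\perp}E^L=0$, i.e.\ $E^L$ in the tangent space $\MM_\U^\perp=\{\U\M^T+\N\U^T\}$, which you dismiss as ``a short argument'' showing the only such feasible direction is trivial. It is not short, it is not deterministic, and it is the second place where the hypothesis $k_i\ge\frac{2}{\la(p_i-q)}$ must be consumed: this case occupies Lemmas \ref{useful}--\ref{cool} and Theorem \ref{part1} of the paper. Concretely, a direction such as $E^L=\v_1\m^T+\m\v_1^T$ with $\m$ nonpositive on $\C_1$ and nonnegative elsewhere is feasible, lies in $\MM_\U^\perp$, can satisfy $\su(E^L)\ge 0$, and strictly \emph{decreases} the nuclear-norm term; whether the $\la\su(E^L_{\Aa^c})$ term compensates depends on how the random support intersects the rank-one blocks $\one\aaa^T+\bbb\one^T$, which is precisely what Lemma \ref{lemprob} quantifies and what forces $e=\min_l\frac{1}{2}(p_l-q-\frac{1}{\la k_l})>0$. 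Your alternative route --- forcing $(E^S)_{\Omega^c}=0$ via the retained $\ell_1$ slack and then invoking rigidity --- has two problems. First, the slack $\la(1-\|F\|_\infty)\|(E^S)_{\Omega^c}\|_1$ is positive only if $\|F\|_\infty<1$ strictly, whereas making the linear functional nonnegative on all feasible perturbations essentially requires the extreme choice $F=\1_{(\Aa^c\cap\Rr)^c}$ (the paper's $Q$); the paper sidesteps this by the exchange argument of Lemma \ref{lemma1}, which shows directly that any optimum has $S^*_\Aa=0$. Second, even granting $(E^S)_{\Omega^c}=0$, the claim ``tangent space plus supported on $\Aa^c\cap\Rr$ implies zero'' is itself a high-probability statement about the missing-edge pattern meeting every row and column of every off-diagonal block $\Rr_{i,j}$, which you neither state nor prove, and which degenerates for small blocks and for the region $\C_{t+1}$. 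Either supply the concentration argument of Theorem \ref{part1} or an honest proof of the rigidity claim; as written, uniqueness of $(L^0,S^0)$ is not established.
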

In Theorem \ref{mainthm}, one can simplify the condition on $\{k_i\}$ by simply requiring $k_{min}\geq \frac{4\sqrt{n}}{p_{min}-q}$ however statement of the theorem will be weaker unless all $\{p_i\}$'s are equal.
Following corollary gives an idea about the case where we observe the partial graph.
\begin{cor}[Result for Partially Observed Graphs] Let $\G$ be a random graph as described in Theorem \ref{mainthm} and we observe each edge of $\G$ with probability $r$ independent of the other edges. Let $\A'$ be the adjacency matrix of the observed subgraph. Then, statement of Theorem \ref{mainthm} holds with variables $\A',\{p'_i\},q'$ instead of $\A,\{p_i\},q$ respectively where $q'=rq$ and $p'_i=rp_i$ for all $i\leq t$. Hence for recovery (w.h.p.), we require $k_i\geq \frac{4\sqrt{n}}{r(p_i-q)}$ for all $i\leq t$.
\end{cor}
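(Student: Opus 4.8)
The plan is to prove the corollary by a reduction, rather than by reworking any of the machinery behind Theorem \ref{mainthm}. Concretely, I would show that the observed graph $\G'$, with adjacency matrix $\A'$, is itself distributed \emph{exactly} according to the probabilistic cluster model of Definition \ref{model}, but with the rescaled parameters $p'_i=rp_i$ and $q'=rq$. Once this is established, the conclusion is immediate: I simply invoke Theorem \ref{mainthm} with $\A',\{p'_i\},q'$ in place of $\A,\{p_i\},q$.

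The heart of the argument is the composition of two independent Bernoulli processes. First I would write each observed off-diagonal entry as a product $\A'_{i,j}=O_{i,j}\,\A_{i,j}$, where the observation indicators $\{O_{i,j}\}_{i>j}$ are i.i.d.\ $\text{Bern}(1,0,r)$ variables independent of $\A$. For a coordinate $(i,j)\in\Rr_{l,l}$ with $l\leq t$, the factors $\A_{i,j}\sim\text{Bern}(1,0,p_l)$ and $O_{i,j}\sim\text{Bern}(1,0,r)$ are independent, so their product is $\text{Bern}(1,0,rp_l)$; for $(i,j)\in\Rr^c$ the product is $\text{Bern}(1,0,rq)$. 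Because the $\{\A_{i,j}\}$ are independent across the lower-triangular coordinates and the $\{O_{i,j}\}$ are independent both of each other and of $\A$, the products $\{\A'_{i,j}\}_{i>j}$ remain mutually independent, with the required symmetric extension. The diagonal convention $\A'_{i,i}=1$ is carried over unchanged (it encodes no genuine edge and is never subject to observation), which is precisely what the corrected-random-support formulation $\A'=\sum_{i=1}^t\1_{\Rr_{i,i}\cap\beta'_i}+\1_{\Rr^c\cap\Gamma'}$ demands. Hence $\A'$ satisfies Definition \ref{model} verbatim with independent corrected random supports $\{\beta'_i\},\Gamma'$ of parameters $\{rp_i\},rq$.

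It then remains to verify the hypotheses of Theorem \ref{mainthm} for the primed parameters and to read off the size condition. Since $r>0$ and $p_{min}>q$, we have $p'_{min}=rp_{min}>rq=q'$, so the density-separation requirement (\ref{pmineq}) holds. Substituting $p'_i-q'=r(p_i-q)$ into the theorem's condition $k_i\geq \frac{4\sqrt{n}}{p'_i-q'}$ yields exactly $k_i\geq \frac{4\sqrt{n}}{r(p_i-q)}$, the stated requirement. Applying Theorem \ref{mainthm} to $\A'$ therefore recovers $L_0=\1_\Rr$ and $S_0=\1_{(\Aa')^c\cap\Rr}$ (where $\Aa'$ is the support of $\A'$), with failure probability governed by $\exp\!\big(-C(p'_{min}-q')^2k_{min}\big)=\exp\!\big(-Cr^2(p_{min}-q)^2k_{min}\big)$.

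I expect no genuine obstacle in this argument: all the content lies in recognizing that independent edge-thinning composes with the cluster model to produce another instance of the same model with multiplicatively rescaled densities. The only steps that demand a little care are confirming that coordinate-wise independence survives the thinning and that the artificial diagonal is treated consistently on both the modeling and the recovery sides; both are routine once the product decomposition $\A'_{i,j}=O_{i,j}\A_{i,j}$ is in place.
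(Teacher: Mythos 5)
Your proposal is correct and is exactly the argument the paper has in mind (the paper treats the corollary as immediate and gives no explicit proof): independent Bernoulli$(r)$ thinning of independent Bernoulli$(p_l)$/Bernoulli$(q)$ edges yields another instance of the probabilistic cluster model with parameters $rp_l$ and $rq$, after which Theorem \ref{mainthm} applies verbatim. Your care with the preservation of independence and the diagonal convention is appropriate but, as you note, routine.
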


\begin{thm} [Main Result for Blind Approach] \label{mainthmb}
Let $p_{min}>\frac{1}{2}>q$ and $\G$ be a random graph generated according to the probabilistic cluster model with cluster sizes $\{k_i\}_{i=1}^t$. Set $\la=\frac{1}{4\sqrt{n}}$ and assume $k_i\geq \frac{8\sqrt{n}}{2p_i-1}$ for all $i\leq t$. Then, there exists constants $c,C>0$ such that as the output of problem \ref{optim3} we have
\begin{align}
&L_0=\1_\Rr\\
&S^0=\1_{\Aa\cap \Rr^c}-\1_{\Aa^c\cap\Rr}
\end{align}
with probability at least
\beq
\label{concentrate2}
1-cn^2\exp(-C(\min\{2p_{min}-1,1-2q\})^2k_{min})
\eeq
\end{thm}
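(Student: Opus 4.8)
The plan is to certify that $(L^0,S^0)$ is optimal by constructing a dual certificate, and then to upgrade optimality to uniqueness via strict complementary slackness. First I would record the geometry of the feasible perturbations: writing any competitor as $L=L^0+H$, $S=S^0-H$, the box constraint (\ref{line12}) together with $L^0=\1_\Rr$ forces $H_{i,j}\le 0$ on $\Rr$ and $H_{i,j}\ge 0$ on $\Rr^c$. Since $L^0=\sum_{i=1}^t k_i\, u_iu_i^T$ with $u_i=k_i^{-1/2}\one_{\C_i}$, collecting $\U=[u_1\cdots u_t]$ gives $\U\U^T=\sum_i k_i^{-1}\1_{\Rr_{i,i}}$, and the subgradient of $\|\cdot\|_\s$ at $L^0$ is $\U\U^T+W$ with $W$ symmetric, $W\U=0$, and spectral norm $\|W\|\le 1$; the subgradient of $\|\cdot\|_1$ at $S^0$ is $\text{sign}(S^0)+F$ with $\text{supp}(F)$ off $\text{supp}(S^0)$ and $\|F\|_\infty\le 1$. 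Adding the two subgradient inequalities,
\[
\|L^0+H\|_\s+\la\|S^0-H\|_1-\|L^0\|_\s-\la\|S^0\|_1 \ \ge\ \langle \U\U^T+W-\la\,\text{sign}(S^0)-\la F,\ H\rangle ,
\]
so it suffices to choose $W,F$ making $G:=\U\U^T+W-\la\,\text{sign}(S^0)-\la F$ satisfy $G\le 0$ on $\Rr$ and $G\ge 0$ on $\Rr^c$ (equivalently, nonnegative KKT multipliers supported on $\Rr$ and $\Rr^c$). Spelling this out by region, the \emph{forced} conditions are $W_{i,j}\le -\la-k_l^{-1}$ on missing edges $\Aa^c\cap\Rr_{l,l}$ and $W_{i,j}\ge\la$ on extra edges $\Aa\cap\Rr^c$; on the remaining two regions ($\Aa\cap\Rr_{l,l}$ and $\Aa^c\cap\Rr^c$) the requirement can be met by choosing $F$, provided $k_l^{-1}+W_{i,j}\le\la$ and $|W_{i,j}|\le\la$ respectively.

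Next I would produce $W$ explicitly as a piecewise-constant random matrix: value $\alpha_l$ on $\Aa\cap\Rr_{l,l}$, $\beta_l$ on $\Aa^c\cap\Rr_{l,l}$, $\gamma$ on $\Aa\cap\Rr^c$, and $\delta$ on $\Aa^c\cap\Rr^c$. I pin the forced regions at (slightly past) threshold, $\beta_l=-(\la+k_l^{-1})$ and $\gamma=\la$, and choose $\alpha_l,\delta$ to enforce $\E[W]=0$ blockwise, i.e.\ $p_l\alpha_l+(1-p_l)\beta_l=0$ and $q\gamma+(1-q)\delta=0$; this makes $\E[W\U]=0$ so that only a small correction is needed for exact orthogonality. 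This yields $\alpha_l=\tfrac{1-p_l}{p_l}(\la+k_l^{-1})$ and $\delta=-\tfrac{q}{1-q}\la$, and a short computation shows the two soft conditions become $k_l^{-1}\le(2p_l-1)\la$ and $\tfrac{q}{1-q}\le 1$. These are \emph{exactly} where the hypotheses enter: $q<\tfrac12$ gives the second, while $p_l>\tfrac12$ and $k_l\ge 8\sqrt n/(2p_l-1)=2/((2p_l-1)\la)$ give $k_l^{-1}\le(2p_l-1)\la/2$, i.e.\ the first with a strict margin of order $(2p_l-1)\la$. (The condition $p_{min}>\tfrac12>q$ is what makes the zero-mean balancing compatible with the forced signs at all.) Crucially, because $\E[W]=0$, $W$ is a symmetric, mean-zero, independent-entry matrix whose entries and standard deviations are all $O(\la)=O(n^{-1/2})$, since $k_l^{-1}=O(\la)$.

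The probabilistic core is then the spectral bound $\|W\|\le 1$. I would apply a spectral-norm concentration estimate for symmetric matrices with independent, bounded, mean-zero entries (matrix Bernstein or an $\epsilon$-net bound), giving $\|W\|\lesssim \sigma\sqrt n$ with $\sigma=O(\la)$; with $\la=\tfrac1{4\sqrt n}$ this is comfortably below $1$. Two technical items accompany this. First, $W\U$ is not exactly zero, so I replace $W$ by its projection onto the orthogonal complement of the tangent space; the row/column sums over each $\C_i$ are sums of $\sim k_i$ bounded mean-zero terms, so each is $O(\sqrt{k_i}\,\la)$ by Hoeffding, and the induced entrywise change is $O(\la/\sqrt{k_i})=o(\la)$, negligible against the $\Theta((2p_l-1)\la)$ and $\Theta((1-2q)\la)$ margins. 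Second, on the two soft regions I set $F=(k_l^{-1}+W_{i,j})/\la$ and $F=W_{i,j}/\la$, and the margins above guarantee $\|F\|_\infty\le1$. A union bound over the $n^2$ entries, together with the per-entry tail governed by the smaller of the two gaps $2p_{min}-1$ and $1-2q$ over sums of length $\gtrsim k_{min}$, produces precisely the failure probability $cn^2\exp(-C(\min\{2p_{min}-1,1-2q\})^2k_{min})$ in (\ref{concentrate2}).

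Finally I would obtain uniqueness by keeping the inequalities strict: pushing $\beta_l,\gamma$ slightly past threshold makes the multipliers strictly positive on the error support $\Aa^c\cap\Rr$ and $\Aa\cap\Rr^c$, and $\|W\|<1$ strictly. Then the sharpened subgradient inequalities, $\|L^0+H\|_\s\ge\|L^0\|_\s+\langle\U\U^T,H\rangle+\|P_{T^\perp}H\|_\s$ and the analogous $\ell_1$ bound, force any zero-cost feasible $H$ to have $P_{T^\perp}H=0$ and $H=0$ off and on the error support; the residual $H\in T$ obeying the feasible sign pattern must then vanish, giving $(L,S)=(L^0,S^0)$. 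I expect the main obstacle to be the third paragraph: simultaneously securing $\|W\|\le 1$ and preserving the entrywise sign margins after the orthogonality correction, since the forced values, the variance-driven spectral norm, and the margins are all of the same order $\la\asymp n^{-1/2}$, and it is exactly the lower bound $k_i\ge 8\sqrt n/(2p_i-1)$ that reconciles them.
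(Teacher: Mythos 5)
Your dual-certificate construction is essentially the paper's: the paper also builds $W_0$ as a piecewise-constant random matrix on the four regions $\Aa\cap\Rr_{l,l}$, $\Aa^c\cap\Rr_{l,l}$, $\Aa\cap\Rr^c$, $\Aa^c\cap\Rr^c$ with blockwise zero mean (it centers via $c_i=-\la(2p_i-1)$, $c=-\la(2q-1)$ rather than pinning the forced entries at threshold, but the resulting conditions $k_l^{-1}\lesssim(2p_l-1)\la$ and $q<\tfrac12$ are the same), bounds $\|W_0\|$ by a spectral concentration result for bounded mean-zero independent entries, and handles $W_0\notin\MM_\U$ by projecting onto $\MM_\U$ and showing the entrywise change is $O(\la\eps)$ with failure probability $n^2\exp(-c\eps^2k_{min})$, which is exactly where the exponent in (\ref{concentrate2}) comes from. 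Up to that point your outline is faithful and correct.

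The genuine gap is your last step. You conclude with ``the residual $H\in T$ obeying the feasible sign pattern must then vanish,'' but this is false as stated: $H=\one^{k_1}(\one^{k_2})^T$ placed on an off-diagonal block $\C_1\times\C_2$ lies in $T=\MM_\U^\perp$, is nonnegative on $\Rr^c$ and zero on $\Rr$, hence satisfies the feasible sign pattern, and is nonzero. To kill such perturbations you need more than the sign pattern — either (i) a genuinely strict certificate on \emph{every} entry, so that $\li G,H\ri>0$ directly for all nonzero feasible $H$ (your margins would permit this, but then the whole ``residual in $T$'' discussion is superfluous and should be replaced by that one-line argument), or (ii) a separate treatment of perturbations in $\MM_\U^\perp$, which is what the paper does: its Theorem \ref{part3}, built on Lemmas \ref{useful}--\ref{cool2}, writes $E^L=\V\M^T+\N\V^T$, reduces to local problems on each block $\Rr_{i,j}$ where $E^{i,j}=\one^{k_i}{\m_i^j}^T+\n_j^i(\one^{k_j})^T$, and proves a \emph{uniform} concentration bound (Lemma \ref{lemprob}) over all nonnegative rank-one matrices $\one^c\aaa^T$ showing $\su(X_{\beta})>(r-\eps)\su(X)$ for a random support $\beta$. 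That uniform-over-all-perturbations concentration is the nontrivial probabilistic content of roughly a third of the proof, and your proposal contains nothing corresponding to it; a single clause asserting the residual vanishes does not substitute for it. Relatedly, your sentence ``$H=0$ off and on the error support'' conflates the two strictness mechanisms ($\|F\|_\infty<1$ off the support versus strictly signed multipliers on it) and leaves unclear which inequalities you are actually making strict after the projection correction; since you pinned $\beta_l$ and $\gamma$ exactly at their thresholds, the projection perturbation can violate the forced inequalities unless you quantify the slack, which you defer to ``slightly past threshold'' without closing the loop.
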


We should emphasize that slightly stronger results can be given for both theorems. For example, we can reduce the lower bound required for $\{k_i\}$ by a factor of four in both theorems at the expense of the error exponent $C$. In fact, one can get even better lower bounds for $\{k_i\}$  by choosing $\la$ as a function of $\{p_i\},q$ however we preferred to make $\la$ independent of $\{p_i\},q$.

The following theorem provides a converse result for blind method.
\begin{thm}
\label{conversethm}
Let $\G$ be a random graph generated according to the probabilistic cluster model with $\{p_i\},q$ and assume $p_{min}>q$, $\la=\frac{C}{\sqrt{n}}$ for some constant $C>0$. Then, if \beq
\frac{1}{2}\geq p_{min}~~~~~\text{or}~~~~~q>\frac{1}{2}~\text{and}~\Rr\neq [n]\times [n]
\eeq
as $n\rightarrow\infty$, $(L^0,S^0)$ given in (\ref{optimal}) is not a minimizer of problem (\ref{optim3}) with probability approaching $1$.
\end{thm}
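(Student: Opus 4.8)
The plan is to prove non-optimality by exhibiting, in each of the two regimes, an explicit feasible pair $(L,S)$ whose objective is strictly smaller than that of the candidate $(L^0,S^0)$ with probability tending to $1$; producing one such competitor is enough to disqualify $(L^0,S^0)$ as a minimizer. First I would record the baseline value. Since $\1_\Rr$ is block diagonal with blocks $\1_{\Rr_{i,i}}$ its nuclear norm is $\sum_{i=1}^t k_i$, and since $S^0$ has disjointly supported $\pm1$ entries, $\|S^0\|_1=|\Aa\cap\Rr^c|+|\Aa^c\cap\Rr|$. Hence the candidate attains
\beq
J_0 = \sum_{i=1}^t k_i + \la\left(|\Aa\cap\Rr^c| + |\Aa^c\cap\Rr|\right).
\eeq
Every edge count below is a sum of independent Bernoulli variables, so throughout I would replace counts by their expectations up to deviations controlled by Hoeffding's inequality; the ``gaps'' we create are wide enough that the resulting failure probabilities are $\exp(-\Omega(n))$.

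Case $p_{min}\le\frac12$: let $i_0$ satisfy $p_{i_0}=p_{min}$ and \emph{delete the low-rank block of the sparsest cluster}, i.e.\ take $L$ equal to $\1_\Rr$ except that its $\Rr_{i_0,i_0}$ block is zeroed, and set $S=\A-L$ (so $S=\A$ on $\Rr_{i_0,i_0}$). This pair is feasible, the new $L$ is still block diagonal so its nuclear norm drops by exactly $k_{i_0}$, and $S$ changes only on $\Rr_{i_0,i_0}$. Writing $N=|\Aa\cap\Rr_{i_0,i_0}|$, the objective change is $\Delta=\la\big(N-(k_{i_0}^2-N)\big)-k_{i_0}=\la(2N-k_{i_0}^2)-k_{i_0}$. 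Since $\E N=k_{i_0}+p_{i_0}k_{i_0}(k_{i_0}-1)$ and $p_{i_0}\le\frac12$, we have $\E[2N-k_{i_0}^2]\le k_{i_0}$, while the threshold $k_{i_0}/\la=k_{i_0}\sqrt n/C$ grows; concentration of $N$ (standard deviation $O(k_{i_0})$) then gives $2N-k_{i_0}^2<k_{i_0}/\la$, i.e.\ $\Delta<0$, with probability $\to1$.

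Case $q>\frac12$ and $\Rr\neq[n]\times[n]$: here I would ``fill in'' the dense background with a rank-one term by taking $L=\1$ and $S=\A-\1$, which is feasible. The nuclear norm rises from $\sum_{i=1}^t k_i$ to $n$, while on $\Rr$ the matrix $S$ is unchanged and on $\Rr^c$ its cost changes from $|\Aa\cap\Rr^c|$ to $|\Aa^c\cap\Rr^c|$, so
\beq
\Delta = \Big(n-\sum_{i=1}^t k_i\Big) + \la\big(|\Aa^c\cap\Rr^c|-|\Aa\cap\Rr^c|\big).
\eeq
Because $q>\frac12$ the bracket concentrates near $|\Rr^c|(1-2q)<0$, making the second term $\approx-\la|\Rr^c|(2q-1)$. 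To beat the nuclear increase $n-\sum_{i=1}^t k_i=k_{t+1}$ I would invoke the geometric bound $|\Rr^c|=n^2-\sum_{i=1}^t k_i^2\ge(n-\sum_{i=1}^t k_i)\,n=k_{t+1}n$ (from $k_i^2\le nk_i$), so that $\la|\Rr^c|(2q-1)\ge C\sqrt n\,k_{t+1}(2q-1)$ dominates $k_{t+1}$ once $n$ is large; the degenerate sub-case $k_{t+1}=0$ forces $t\ge2$ (otherwise $\Rr=[n]\times[n]$), where the two nuclear norms coincide and any strictly positive $\ell_1$ saving already gives $\Delta<0$. Concentration again makes $\Delta<0$ hold with probability $\to1$.

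The Bernoulli concentration estimates are routine given how wide the gaps are. The step requiring genuine care is the geometry of Case $q>\frac12$: verifying that $|\Rr^c|$ is guaranteed large whenever $\Rr\neq[n]\times[n]$, and in particular controlling the near-degenerate configuration of a single dominant cluster of size close to $n$ together with the boundary case $k_{t+1}=0$, so that the $\Omega(\sqrt n)$ advantage of the $\ell_1$ savings reliably overwhelms the at-most-$n$ increase in nuclear norm.
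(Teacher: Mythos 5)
Your proposal is correct and follows essentially the same route as the paper: in the first regime you zero out a low-density block of $\1_\Rr$ (the paper drops all blocks with $p_i\le\tfrac12$, you drop only the sparsest one, a trivial variation), and in the second regime you use exactly the paper's competitor $L=\1$, $S=-\1_{\Aa^c}$, with the same Chernoff/Hoeffding concentration and the same lower bound $|\Rr^c|\ge k_{t+1}n$ (plus the $k_{t+1}=0$, $t\ge 2$ degenerate sub-case) that the paper uses. The point you flag as needing care is precisely the one the paper addresses with its $2ab\ge 2(a+b-1)$ estimate, so nothing essential is missing.
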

{\bf{Remark:}} Note that if $\Rr=[n]\times[n]$ there is nothing to solve as all nodes are in the same cluster.

%
%
%
%
%
%
%
\section{Future Extensions and Conclusion}
\label{concsec}

\subsection{Simulation Results}
We considered two relatively small cases. For the first case, we have $t=2$, $n=64$, $c_1=c_2=28$, $q=0.15$ and $p_1=p_2=p$ is variable. We plotted the empirical probability of success for both methods as a function of $p$ in Figure \ref{fig1}.

Secondly, in order to illustrate the difference between intelligent and blind approaches, we set $t=1$, $n=50$, $c_1=40$, $q=0.10$ and varied $p_1=p$. Due to Theorem \ref{conversethm}, for blind approach to work, we always need $p>1/2$. On the other hand, intelligent approach will work for any $p>q$ as long as $k_{min}$ is sufficiently large. Hence, when we increase $k_{min}$ we expect to see a better recovery region for intelligent approach compared to blind. We should remark that, in a probabilistic setting, $t=1$ case is trivial as we can find the cluster with high probability by looking at the nodes with high degree. Empirical curve is given in Figure \ref{fig2}. 

{\bf{Remark:}} In order to keep the model size $n$ small, we used $\la=\frac{1}{\sqrt{n}}$ in both of the simulations.

\begin{figure}[t]
\centering
  \includegraphics[width= 1\textwidth]{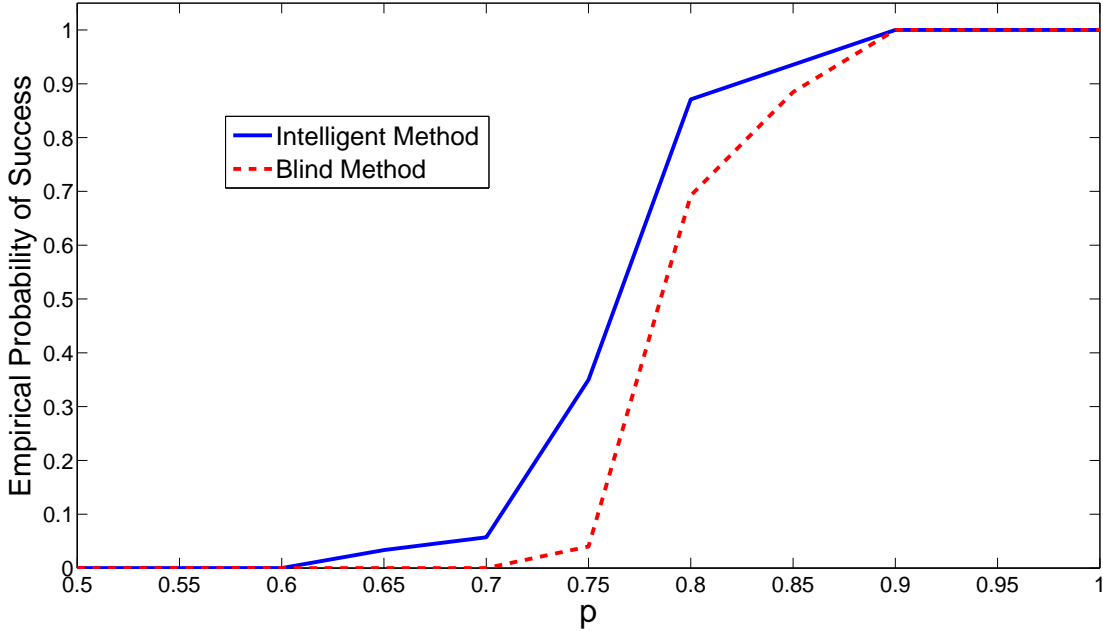}
 \caption{\scriptsize{Two methods perform close to each other. Also observe that phase transition is sharp and for $p>0.8$ both methods succeed w.h.p.}}
  \label{fig1}
\end{figure}
\begin{figure}[t]
\centering
  \includegraphics[width= 1\textwidth]{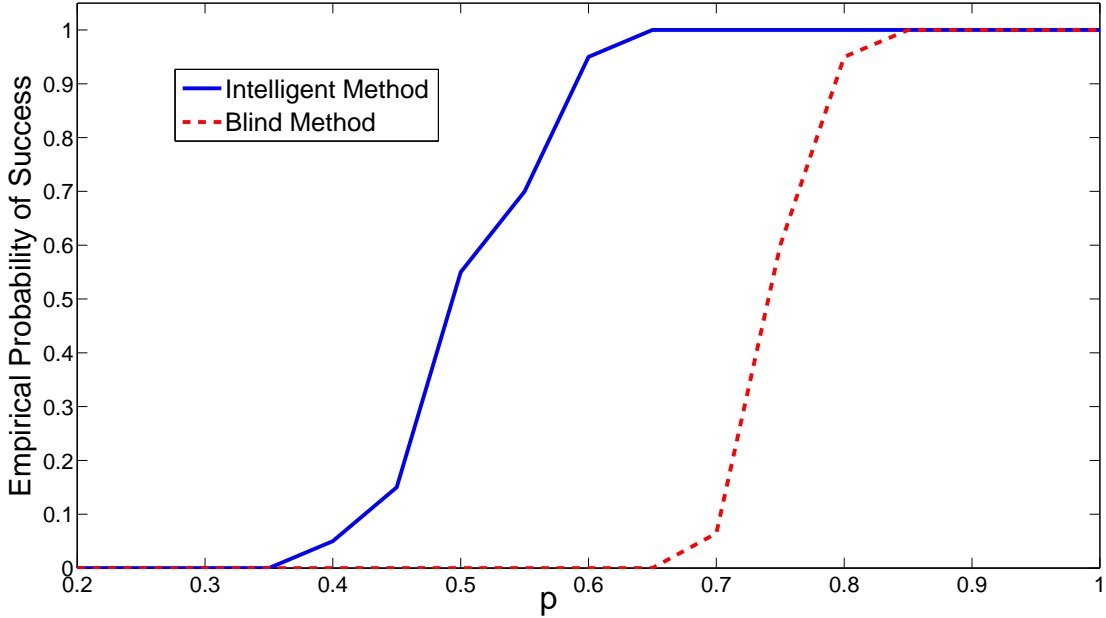}
 \caption{\scriptsize{Intelligent method succeeds for $p>0.6$ and blind succeeds for $p>0.8$. As we let $k_{min}\rightarrow \infty$ intelligent and blind methods will succeed for $p>q=0.1$ and $p>0.5$ respectively.}}
  \label{fig2}
\end{figure}

\subsection{Future Extensions}
\subsubsection{Alternative approaches}
Our simulation results indicate that a slight modification to problem (1) of \cite{Ames2} might be an alternative to the methods analyzed in this paper. Let $\e\in\R^n$ be the vector of all $1$'s. Then assuming we know the number of clusters $t$, proposed convex program is as follows
\begin{align}
\label{optim7}
&\max_{L}~\su(L_\Aa)\\
&\text{subject to}\\
&\hspace{15pt}L\succeq 0~~\text{(positive semi-definite)}\\
&\hspace{15pt}\text{trace}(L)=t\\
&\hspace{15pt}L_{i,j}\geq 0~~\text{for all}~1\leq i,j\leq n\\
&\hspace{15pt}(X\e)_i\leq e_i=1~~\text{for all}~1\leq i\leq n
\end{align}
The desired solution of this problem is
\beq
L^0=\sum_{i=1}^t\frac{1}{k_i}\1_{\Rr_{i,i}}
\eeq
It is easy to see that $L^0$ is feasible. Program (\ref{optim7}) might be a more useful approach compared to (\ref{optim2}) as it requires number of clusters $t$ as a prior information instead of $|\Rr|$. However, we only considered ``low rank $+$ sparse'' decompositions in this paper.
\subsubsection{Removing the disjointness assumption}
As a natural extension, we consider removing the assumption of disjoint clusters. When clusters are allowed to intersect, intuitively $\1_\Rr$ is no longer low rank. Although, we don't provide a proof, we believe rank of $\1_\Rr$ is equal to the number of distinct nonempty sets of type $\bigcap_{i\in S} \C_i\times \C_i$ where $S\subseteq[t]$. This suggests rank of $\1_\Rr$ can be as high as $2^t-1$ which grows exponentially with number of clusters. This intuition is verified by simulation results. Consequently, convex programs (\ref{optim2}) and (\ref{optim3}) might not be good candidates when clusters are allowed to intersect as we aim to find $\1_\Rr$ as a solution in these approaches. As a result, an alternative approach which will naturally result in a low rank solution is of significant interest. Another related problem is,  when clusters can intersect, how to obtain $\{\C_i\}_{i=1}^t$ from the knowledge of $\Rr$ assuming we are able to find $\Rr$ as a result of the optimization. Certainly, we may not always be able to uniquely decompose $\Rr$ into $\{\C_i\}$, but in general decomposition which yields smallest number of clusters might be of interest.
\subsubsection{Extremely sparse graph}
In many cases $\{p_i\},q$ decays as the model size grows. For example, in order to have a connected graph with high probability, Erd\"os-R\'enyi model with edge probability $r$ requires only $r>\frac{ln(n)}{n}$ i.e. average node degree of $ln(n)$. Sparse graphs are very common and useful in social networks \cite{Domino} and web graphs \cite{Kumar} hence it would be of interest to extend results of this paper to the setting where $\{p_i\},q$ are not constant. We believe this can be done by using concentration results specific to the spectral norm of sparse matrices.
%
%

\subsection{Final Comments}
In this paper, we analyzed two novel approaches for detection of disjoint clusters in a general probabilistic model. Our results are consistent with the existing works in literature and significantly extend results of \cite{Ames}, \cite{Ames2}. Simulation results suggest that even for a relatively small model, our methods yield the desired result with high probability.

%
%
%
%
%
%
%
%
%
%
%
%
\section{Proof of Theorem \ref{mainthm}}
\label{intelproof}
Analysis of problems (\ref{optim2}) and (\ref{optim3}) are similar to a great extent. Therefore, many of the results for this section will also be used for section \ref{blindproof}. In the following discussion, $\la$ and $p_{min}-q$ is always assumed to be positive.

\subsection{Perturbation Analysis for $(L^0,S^0)$}
Let $(L^*,S^*)$ denote the optimal solution of problem (\ref{optim2}). We'll follow a conventional proof strategy to show that under some conditions, for any feasible nonzero perturbation $(E^L,E^S)$ over $(L^0,S^0)$ given in (\ref{optimal2}), the objective function strictly increases i.e.
\beq
\|L^0+E^L\|_\s+\lambda\|S^0+E^S\|_1> \|L^0\|_\s+\la\|S^0\|_1
\eeq
Consequently, due to convexity we'll conclude $(L^*,S^*)=(L^0,S^0)$.

\subsubsection{Observations}
\label{observes}
%
%
%
%
%
%
\begin{lem}
\label{lemma1}
For optimal solution of problem \ref{optim2}, we have $S^*=L^*_{\Aa^c}$.
\end{lem}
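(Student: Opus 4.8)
The plan is to read off the entire structure of $S^*$ directly from the two linear constraints together with a one-line exchange argument, after splitting the index set $[n]\times[n]$ into $\Aa$ and its complement $\Aa^c$.

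First I would expand the trace constraint (\ref{line2}). Since $\A$ is a $0/1$ matrix with $\one_\Aa=\A$, the matrix $\1-\A$ equals $1$ on $\Aa^c$ and $0$ on $\Aa$, so
\[
\text{trace}((\1-\A)^T(L-S))=\sum_{(i,j)\in\Aa^c}(L_{i,j}-S_{i,j}).
\]
Constraint (\ref{line1}) forces every summand $L_{i,j}-S_{i,j}$ to be nonnegative, hence a vanishing sum of nonnegative terms makes each term vanish: $L_{i,j}=S_{i,j}$ for all $(i,j)\in\Aa^c$. Notice this already holds for \emph{every} feasible pair $(L,S)$, and in particular it establishes $S^*=L^*$ on $\Aa^c$, which is exactly the claim $S^*=L^*_{\Aa^c}$ restricted to the coordinates in $\Aa^c$.

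It then remains to show $S^*_{i,j}=0$ for $(i,j)\in\Aa$, and here I would invoke optimality. An entry $S_{i,j}$ with $(i,j)\in\Aa$ appears in neither (\ref{line2}) (whose surviving sum ranges only over $\Aa^c$) nor (\ref{line3}) (which constrains $L$ alone); it is governed solely by $0\leq S_{i,j}\leq L_{i,j}$ from (\ref{line1}) and contributes the nonnegative amount $\la S_{i,j}$ to the objective, because $S\geq 0$ gives $\|S\|_1=\su(S)$. Consequently, starting from any optimal $(L^*,S^*)$ and overwriting $S^*_{i,j}$ by $0$ on all of $\Aa$ leaves $L^*$, the nuclear-norm term, and constraints (\ref{line2}) and (\ref{line3}) untouched, keeps (\ref{line1}) valid (since $L^*_{i,j}\geq 0$), and can only decrease $\la\|S\|_1$. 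By optimality this decrease cannot be strict, so $S^*_{i,j}$ must already have been $0$ on $\Aa$. Combining the two index sets yields $S^*=L^*_{\Aa^c}$.

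I do not expect a genuine obstacle here: the statement is structural and follows from the linear constraints plus the exchange step. The only point requiring care is verifying that zeroing $S$ on $\Aa$ disturbs none of the remaining constraints, which is immediate once one checks that the $\Aa$-entries of $S$ are absent from (\ref{line2}) and (\ref{line3}) and are bounded below by $0$ in (\ref{line1}).
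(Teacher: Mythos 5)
Your proof is correct and follows essentially the same route as the paper: expand the trace constraint as a sum of nonnegative terms over $\Aa^c$ to force $L^*=S^*$ there, then use an exchange argument (replacing $S^*$ by $S^*_{\Aa^c}$ preserves feasibility and strictly lowers $\la\|S\|_1$ unless $S^*_\Aa=0$) to kill the entries on $\Aa$. Your write-up is in fact slightly more careful than the paper's in explicitly verifying that zeroing $S$ on $\Aa$ leaves constraints (\ref{line2}) and (\ref{line3}) untouched.
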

\begin{proof}
From (\ref{line2}) we have
\beq
\su((L^*-S^*)_{\Aa^c})=0
\eeq
This follows from the fact that $\1-\A=\1_{\Aa^c}$.
Combining this with (\ref{line1}), we can conclude that
\beq
\label{observe1}
L^*_{\Aa^c}=S^*_{\Aa^c}
\eeq
since $L^*_{i,j}\geq S^*_{i,j}$ for all $i,j\in[n]$. 

Secondly, one can observe that if $(L,S)$ is feasible for problem (\ref{optim2}), then $(L,S_{\Aa^c})$ is also feasible and gives a lower (or equal) cost. This is because:
\begin{itemize}
\item The only constraint on entries of $S$ over $\Aa$ is $L_{i,j}\geq S_{i,j}\geq 0$ and $S_{i,j}=0$ will trivially satisfy this.
\item $\|S\|_1=\|S_\Aa\|_1+\|S_{\Aa^c}\|_1\geq \|S_{\Aa^c}\|_1$ with equality if and only if $S_\Aa=0$. Therefore, the objective will not increase by substituting $S$ by $S_{\Aa^c}$.
\end{itemize}
 Hence for optimality, we require:
\beq
\label{observe2}
S^*_{\Aa}=0
\eeq
Using (\ref{observe1}) and (\ref{observe2}), WLOG, $S$ takes the following simple form:
\beq
\label{Sfinal}
S^*=L^*_{\Aa^c}
\eeq
\end{proof}

A natural interpretation of \ref{Sfinal} is that, the only role of $S$ is filling the missing edges inside the clusters. Actually, we can write a simpler and equivalent optimization, where we get rid of the variable $S$; but still get the same result as problem (\ref{optim2}), as follows
\begin{align}
\label{optim4}
&\min_{L} \|L\|_\s+\lambda\|L_{\Aa^c}\|_1\\
\nonumber&\text{subject to}\\
\nonumber&\hspace{15pt} 1\geq L_{i,j}\geq 0~~\text{for all}~i,j\\
\nonumber&\hspace{15pt} \su(L)\geq|\Rr|
\end{align}
Finally notice that $(L^0,S^0)$ satisfies (\ref{Sfinal}) as expected.

\subsubsection{Optimality Conditions for $(L^0,S^0)$}
Let $\li\cdot,\cdot\ri$ denote the usual inner product i.e. $\li X,Y\ri=\text{trace}(X^TY)=\sum_{i,j} X_{i,j}Y_{i,j}$. Also let $\text{sign}(\cdot):\R^{n\times n}\rightarrow\{-1,0,1\}^{n\times n}$ such that 
\beq
\text{sign}(X)_{i,j}=\begin{cases}1~~\text{if}~X_{i,j}>0\\0~~\text{if}~X_{i,j}=0\\-1~~\text{if}~X_{i,j}<0\end{cases}
\eeq
We would like to show any feasible nonzero perturbation $(E^L,E^S)$ over $(L^0,S^0)$ will strictly increase the objective. Due to Lemma \ref{lemma1}, we can assume 
\beq
\label{eform}
E^S=E^L_{\Aa^c}
\eeq
as $(L^0,S^0)$ satisfies (\ref{Sfinal}). In the following discussion, we analyze the increase in the objective due to the perturbation.

\noindent{\bf{Increase due to $E^S$:}} Similar to \cite{Candes}, by using the subgradient of the $\ell_1$ norm we can write:
\begin{equation}
\label{sparseq}
\|S^0+E^S\|_1\geq \|S^0\|_1+\li \text{sign}(S^0)+Q,E^S\ri
\end{equation}
for all $\|Q\|_\infty\leq 1$, $Q_{\Aa^c\cap \Rr}=0$ as $S^0$ is nonzero over $\Aa^c\cap \Rr$. Here $\|\cdot\|_\infty$ is the infinity norm, i.e., $\|X\|_\infty=\max_{1\leq i,j\leq n} |X_{i,j}|$.

Note that $\text{sign}(S^0)=S^0=\1_{\Aa^c\cap \Rr}$. Then, by choosing $Q=\1_{(\Aa^c\cap \Rr)^c}$ and using (\ref{eform}) in (\ref{sparseq}), we find:
\beq
\label{sparseq2}
\|S^0+E^S\|_1\geq \|S^0\|+\su(E^L_{\Aa^c})
\eeq

\noindent{\bf{Increase due to $E^L$:}} Let $\u_l\in\R^n$ be the characteristic vector of $\C_l$ with unit norm i.e. for $1\leq i\leq n$, $i$'th entry of $\u_l$ is
\beq
\u_{l,i}=\begin{cases} \frac{1}{\sqrt{k_l}}~~\text{if}~i\in\C_l\\
0~~\text{else}\end{cases}
\eeq
Let $\U=[\u_1~\dots~\u_t]\in\R^{n\times t}$ and $\MM_\U=\{X\in\R^{n\times n}:X\U=X^T\U=0\}$. Also $\|\cdot\|$ denotes the spectral norm, i.e., the maximum singular value. Then, following lemma characterizes the increase in the objective due to $E^L$.
\begin{lem} \label{lem11} For any $E^L$ and $W$ with $\|W\|\leq 1$, $W\in\MM_\U$ we have
\beq
\label{seven}
\|L^0+E^L\|_\s\geq \|L^0\|_\s+\sum_{l=1}^t \frac{1}{k_l}\su(E^L_{\Rr_{l,l}})+\left<E^L,W\right>
\eeq
\end{lem}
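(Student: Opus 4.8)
The plan is to recognize that inequality \eqref{seven} is nothing but the subgradient inequality for the nuclear norm evaluated at $L^0$, so the real work is (i) writing down an explicit SVD of $L^0$ and (ii) translating the resulting abstract inner product into the block sums $\su(E^L_{\Rr_{l,l}})$. First I would compute the SVD. Since $\u_l$ has entries $1/\sqrt{k_l}$ on $\C_l$ and zero elsewhere, one checks directly that $\1_{\Rr_{l,l}}=k_l\,\u_l\u_l^T$, so $L^0=\sum_{l=1}^t k_l\,\u_l\u_l^T$. Because the clusters are disjoint, the vectors $\{\u_l\}$ have disjoint supports and are therefore orthonormal; hence, writing $\U=[\u_1~\dots~\u_t]$ and $\Sigma=\text{diag}(k_1,\dots,k_t)$ with all $k_l>0$, the factorization $L^0=\U\Sigma\U^T$ is a genuine (reduced) SVD. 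Note that $L^0$ is symmetric and positive semidefinite, so its left and right singular vectors coincide and the left/right factor is $\U$ in both slots.

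Next I would invoke the standard characterization of the subdifferential of the nuclear norm: for a matrix with reduced SVD $\U\Sigma\U^T$, every matrix of the form $\U\U^T+W$ with $\|W\|\le 1$, $\U^T W=0$, and $W\U=0$ is a subgradient of $\|\cdot\|_\s$ at $L^0$. The two orthogonality conditions are exactly the membership $W\in\MM_\U$, and the spectral bound is $\|W\|\le 1$, matching the hypotheses of the lemma. The subgradient inequality then reads
\beq
\|L^0+E^L\|_\s\;\ge\;\|L^0\|_\s+\li \U\U^T+W,\,E^L\ri
\eeq
for every admissible $W$, which is \eqref{seven} once the inner product is split as $\li \U\U^T,E^L\ri+\li W,E^L\ri$.

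It remains to evaluate the first inner product. Using $\li \U\U^T,E^L\ri=\ttr(\U^T E^L\U)=\sum_{l=1}^t \u_l^T E^L\u_l$ and the fact that $\u_l$ is supported on $\C_l$ with constant value $1/\sqrt{k_l}$, I would compute $\u_l^T E^L\u_l=\frac{1}{k_l}\sum_{i,j\in\C_l}E^L_{i,j}=\frac{1}{k_l}\su(E^L_{\Rr_{l,l}})$, which yields exactly $\sum_{l=1}^t\frac{1}{k_l}\su(E^L_{\Rr_{l,l}})$ and completes the proof. The only step requiring genuine care is verifying that $G=\U\U^T+W$ really is a subgradient, i.e.\ that $\|\U\U^T+W\|\le 1$; the point is that $\U\U^T$ has row and column space in $\text{range}(\U)$ while the conditions $W\U=\U^T W=0$ force $W$ to act on the orthogonal complement, so the two pieces have non-interfering singular directions and $\|\U\U^T+W\|=\max(1,\|W\|)\le 1$. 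Together with $\li \U\U^T+W,L^0\ri=\ttr(\Sigma)=\|L^0\|_\s$, this confirms $G\in\partial\|\cdot\|_\s(L^0)$, after which convexity delivers the inequality; everything else is the routine bookkeeping above.
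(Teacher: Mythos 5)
Your proposal is correct and follows essentially the same route as the paper: write $L^0=\sum_{l=1}^t k_l\u_l\u_l^T=\U\,\mathrm{diag}(k_1,\dots,k_t)\,\U^T$ as its SVD, apply the subgradient inequality for the nuclear norm with subgradient $\U\U^T+W$, and identify $\li\U\U^T,E^L\ri$ with $\sum_{l=1}^t\frac{1}{k_l}\su(E^L_{\Rr_{l,l}})$ via $\U\U^T=\sum_{l=1}^t\frac{1}{k_l}\1_{\Rr_{l,l}}$. The only difference is that you verify the subgradient membership explicitly where the paper simply cites the standard characterization.
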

\begin{proof}
Singular value decomposition of $L^0$ can be written as 
\beq
\sum_{l=1}^tk_l\u_l\u_l^T=\U\begin{bmatrix}
k_1 & & & \\
& k_2 & &  \\
 & & \ddots & \\
& & & k_t
\end{bmatrix}\U^T
\eeq
as a result columns of $\U$, $\{\u_l\}_{l=1}^t$, are the left and right singular vectors of $L^0$. Then, we have
\beq
\|L^0+E^L\|_\s\geq \|L^0\|_\s+\left<E^L,W+\U\U^T\right>
\eeq
for any $W\in\MM_\U$ with $\|W\|\leq 1$, which follows from the subgradient of the nuclear norm, similar to \cite{Candes}. Finally, observe that
\beq
\U\U^T=\sum_{l=1}^t \frac{1}{k_l}\1_{\Rr_{l,l}}\implies\li E^L, \U\U^T\ri=\sum_{l=1}^t \frac{1}{k_l}\su(E^L_{\Rr_{l,l}})
\eeq
to conclude.
\end{proof}
{\bf{Overall increase:}} By combining (\ref{sparseq2}) and Lemma \ref{lem11}, we have the following lower bound for the increase of the objective:
\beq
\label{fincheck}
(\|L^0+E^L\|_\s-\|L^0\|_\s)+\la(\|S^0+E^S\|_1-\|S^0\|_1)\geq \sum_{l=1}^t\frac{1}{k_l}\su(E^L_{\Rr_{l,l}})+\la \su(E^L_{\A^c})+\li E^L,W\ri
\eeq
for any $W\in \MM_\U$, $\|W\|\leq 1$. Then, as long as the right hand side of (\ref{fincheck}) can be made strictly positive for all feasible nonzero $E^L$ (by properly choosing $W$), $(L^0,S^0)$ is the unique optimal solution of problem (\ref{optim2}). Let us call
\beq
\label{fel}
f(E^L,W)= \sum_{l=1}^t\frac{1}{k_l}\su(E^L_{\Rr_{l,l}})+\la \su(E^L_{\Aa^c})+\li E^L,W\ri
\eeq
\subsubsection{Main Cases}
The following lemma will help us separate the problem into two main cases.
\begin{lem}
\label{twocase}
Given $E^L$, assume there exists $W_0\in \MM_\U$ with $\|W_0\|<1$ such that $f(E^L,W_0)\geq 0$. Then at least one of the followings holds:
\begin{itemize}
\item There exists $W^*\in \MM_\U$ with $\|W^*\|\leq 1$ and $f(E^L,W^*)>0$
\item For all $W\in \MM_\U$, $\li E^L,W\ri=0$.
\end{itemize}
\end{lem}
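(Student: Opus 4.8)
The plan is to exploit the fact that $f(E^L,W)$ is an \emph{affine} function of $W$: only the last term $\li E^L,W\ri$ depends on $W$, so writing $g=\sum_{l=1}^t\frac{1}{k_l}\su(E^L_{\Rr_{l,l}})+\la\su(E^L_{\Aa^c})$ we have $f(E^L,W)=g+\li E^L,W\ri$. The two structural facts I would lean on are that $\MM_\U$ is a \emph{linear subspace} (hence closed under sign flips and small scalings) and that the hypothesis supplies a point $W_0$ lying \emph{strictly} inside the spectral-norm ball, $\|W_0\|<1$, leaving room to perturb $W_0$ while staying feasible. Given these, I would argue by a simple dichotomy on the second alternative.

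If $\li E^L,W\ri=0$ for every $W\in\MM_\U$, the second bullet holds and we are done. Otherwise there is some $W_1\in\MM_\U$ with $\li E^L,W_1\ri\neq 0$; since $\MM_\U$ is a subspace, replacing $W_1$ by $-W_1$ if necessary I may assume $\li E^L,W_1\ri>0$. I then set $W^*=W_0+\eps W_1$ for a small $\eps>0$. Because $\MM_\U$ is a subspace, $W^*\in\MM_\U$. By the triangle inequality $\|W^*\|\leq\|W_0\|+\eps\|W_1\|$, so choosing $\eps<(1-\|W_0\|)/\|W_1\|$ keeps $\|W^*\|\leq 1$. Finally, affineness in $W$ gives $f(E^L,W^*)=f(E^L,W_0)+\eps\li E^L,W_1\ri\geq\eps\li E^L,W_1\ri>0$, using $f(E^L,W_0)\geq 0$, so the first bullet holds.

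There is no deep obstacle here; the only point requiring care is the feasibility of the perturbed $W^*$, and this is exactly why the hypothesis must demand the strict inequality $\|W_0\|<1$ rather than $\|W_0\|\leq 1$. That strict slack is what lets a small move in the $W_1$ direction remain inside the unit spectral ball while simultaneously upgrading the non-strict inequality $f(E^L,W_0)\geq 0$ into the strict one $f(E^L,W^*)>0$.
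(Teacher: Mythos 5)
Your proof is correct and is essentially identical to the paper's: both exploit linearity of $\li E^L,\cdot\ri$, flip the sign of the perturbation direction if needed, and use the strict slack $1-\|W_0\|>0$ to add a small multiple of that direction while staying in the unit spectral ball (the paper simply normalizes $\|W'\|=1$ and takes the step size to be exactly $1-\|W_0\|$). No gaps.
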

\begin{proof}
Let $c=1-\|W_0\|$. Assume $\li E^L,W'\ri\neq 0$ for some $W'\in\MM_\U$. Since $\li E^L,W'\ri$ is linear in $W'$, WLOG, let $\li E^L,W'\ri>0$, $\|W'\|=1$. Then choose $W^*=W_0+cW'$. Clearly, $\|W^*\|\leq 1$, $W^*\in\MM_\U$ and 
\beq
f(E^L,W^*)=f(E^L,W_0)+\li E^L,cW'\ri>f(E^L,W_0)\geq 0
\eeq
\end{proof}
Notice that, for all $W\in \MM_\U$, $\li E^L,W\ri=0$ is equivalent to $E^L\in \MM_\U^\perp$ which is the orthogonal complement of $\MM_\U$ in $\R^{n\times n}$. $\MM_\U^\perp$ has the following simple characterization:
\beq
\MM_\U^\perp=\{X\in\R^{n\times n}:X=\U \M^T+\N\U^T~~\text{for some}~\M,\N\in\R^{n\times t}\}
\eeq
In the following discussion, based on Lemma \ref{twocase}, as a first step, in section \ref{seccase1}, we'll show that, under certain conditions, for all $E^L\in \MM_\U^\perp$ with high probability (w.h.p.)
\beq
\label{eqEL}
g(E^L)= \sum_{i=1}^t\frac{1}{k_l}\su(E^L_{\Rr_{l,l}})+\la \su(E^L_{\Aa^c})>0
\eeq
Secondly, in section \ref{seccase2}, we'll argue that, under certain conditions, there exists a $W\in \MM_\U$ with $\|W\|<1$ such that w.h.p. $f(E^L,W)\geq 0$ for all feasible $E^L$.  This $W$ is called the dual certificate. Finally, combining these two arguments, we'll conclude that $(L^0,S^0)$ is the unique optimal w.h.p.

\subsection{Solving for $E^L\in \MM_\U^\perp$ case}
\label{seccase1}
In order to simplify the following discussion, we let
\begin{align}
\label{gparts}
&g_1(X)=\sum_{i=1}^t\frac{1}{k_l}\su(X_{\Rr_{l,l}})\\
&\nonumber g_2(X)=\su(X_{\Aa^c})
\end{align}
so that $g(X)=g_1(X)+\la g_2(X)$ in (\ref{eqEL}). Also let $\V=[\v_1~\dots~\v_t]$ where $\v_i=\sqrt{k_i}\u_i$. Thus, $\V$ is basically obtained by, normalizing columns of $\U$ to make its nonzero entries $1$. 
Assume $E^L\in \MM_\U^\perp$. Then, we can write 
\beq
E^L=\V\M^T+\N\V^T
\eeq
Let $\m_i,\n_i$ denote $i$'th columns of $\M,\N$ respectively. Notice that $\su(L^0)=|\Rr|$ hence from (\ref{line3}) 
\beq
\label{observe3}
\su(E^L)\geq 0
\eeq 
Similarly, from $L^0$ and (\ref{line1}) it follows that
\begin{align}
\label{ES}
&E^L_{\Rr^c}~\text{is (entrywise) nonnegative}\\
&\nonumber E^L_{\Rr}~\text{is nonpositive}
\end{align}
Now, we list some simple observations regarding structure of $E^L$. We can write
\beq
\label{equa}
E^L=\sum_{i=1}^t (\v_i\m_i^T+ \n_i\v_i^T)=\sum_{i=1}^{t+1}\sum_{j=1}^{t+1}E^L_{\Rr_{i,j}}
\eeq

Notice that $E^L_{\Rr_{i,j}}$ is \emph{contributed} by only two components which are:  $\v_i\m_i^T$ and $\n_j\v_j^T$.

Let $\{a_{i,j}\}_{j=1}^{k_i}$ be an (arbitrary) indexing of elements of $\C_i$ i.e. $\C_i=\{a_{i,1},\dots,a_{i,k_i}\}$. For a vector $\z\in\R^n$ let $\z^i\in\R^{k_i}$ denote the vector induced by entries of $\z$ in $\C_i$. Basically, for any $1\leq j\leq k_i$, $\z^i_j=\z_{a_{i,j}}$. Also, let $E^{i,j}\in\R^{k_i\times k_j}$ which is $E^L$ induced by entries on $\Rr_{i,j}$. In other words, 
\beq
E^{i,j}_{c,d}=E^L_{a_{i,c},a_{j,d}}~~~\text{for any}~(i,j)\in\C_i\times \C_j~\text{and for any}~1\leq c\leq k_i,~1\leq d\leq k_j
\eeq
Basically, $E^{i,j}$ is same as $E^L_{\Rr_{i,j}}$ when we get rid of trivial zero rows and zero columns.  Then
\beq
\label{eij}
E^{i,j}=\one^{k_i}{\m_i^j}^T+\n_j^i{\one^{k_j}}^T
\eeq
Clearly, given $\{E^{i,j}\}_{1\leq i,j\leq n}$, $E^L$ is uniquely determined. Now, assume we fix $\su(E^{i,j})$ for all $i,j$ and we would like to find the \emph{worst} $E^L$ subject to these constraints. Variables in such an optimization are $\m_i,\n_i$. Basically we are interested in
\begin{align}
\label{optimel}
&\min g(E^L)\\
&\text{subject to}\\
&\hspace{15pt}\su(E^{i,j})=c_{i,j}~\text{for all}~i,j\\
\label{ES2}
&\hspace{15pt}E^{i,j}~\begin{cases}\text{nonnegative if}~i\neq j\\\text{nonpositive if}~i=j\end{cases}
\end{align}
where $\{c_{i,j}\}$ are constants. Constraint (\ref{ES2}) follows from  (\ref{ES}). Essentially, based on (\ref{observe3}), we would like to show that with high probability for any nonzero $E^L$ with $\sum_{i,j}c_{i,j}\geq 0$ we have $g(E^L)>0$. {{\bf{Remark:}}} For the special case of $i=j=t+1$, notice that $E^{i,j}=0$.

In (\ref{optimel}), $g_1(E^L)$ is fixed and equal to $\sum_{i=1}^t \frac{1}{k_i}c_{i,i}$. Consequently, based on (\ref{gparts}), we just need to do the optimization with the objective $g_2(E^L)=\su(E^L_{\Aa^c})$.

Let $\beta_{i,j}\subseteq [k_i]\times [k_j]$ be a set of coordinates defined as follows. For any $(c,d)\in [k_i]\times [k_j]$
\beq
(c,d)\in\beta_{i,j}~\text{iff}~(a_{i,c},a_{j,d})\in\Aa
\eeq

For $(i_1,j_1)\neq (i_2,j_2)$, $(\m_{i_1}^{j_1},\n_{j_1}^{i_1})$ and $(\m_{i_2}^{j_2},\n_{j_2}^{i_2})$ are independent variables. Consequently, due to (\ref{eij}), we can partition problem (\ref{optimel}) into the following smaller disjoint problems.
\begin{align}
\label{local}
&\min_{\m_i^j,\n_j^i}~ \su(E^{i,j}_{\beta^c_{i,j}})\\
&\text{subject to}\\
&\hspace{15pt}\su(E^{i,j})=c_{i,j}\\
&\hspace{15pt}E^{i,j}~\begin{cases}\text{nonnegative if}~i\neq j\\\text{nonpositive if}~i= j
\end{cases}
\end{align}
Then, we can solve these problems locally (for each $i,j$) to finally obtain
\beq
g_2(E^{L,*})=\sum_{i,j}\su(E^{i,j,*}_{\beta^c_{i,j}})
\eeq
to find the overall result of problem (\ref{optimel}), where $*$ denotes the optimal solutions in problems (\ref{optimel}) and (\ref{local}). The following lemma will be useful for analysis of these local optimizations.
\begin{lem}
\label{useful}
Let $\aaa\in\R^{c}$, $\bbb\in\R^{d}$ and $X=\one^{c}\bbb^T+\aaa{\one^{d}}^T$ be variables and $C_0\geq 0$ be a constant. Also let $\beta\subseteq[c]\times [d]$. Consider the following optimization problem
\begin{align}
&\min_{\aaa,\bbb}~\su(X_\beta)\\
&\text{subject to}\\
&\hspace{15pt}X_{i,j}\geq 0~~\text{for all}~i,j\\
&\hspace{15pt}\su(X)=C_0
\end{align}
For this problem there exists a (entrywise) nonnegative minimizer $(\aaa^0,\bbb^0)$.
\end{lem}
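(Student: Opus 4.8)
The plan is to exploit the fact that every quantity in the problem --- the objective $\su(X_\beta)$, the nonnegativity constraints $X_{i,j}\geq 0$, and the normalization $\su(X)=C_0$ --- depends on $(\aaa,\bbb)$ only through the matrix $X$, whose entries are $X_{i,j}=a_i+b_j$. This representation of $X$ is not unique: for any $\gamma\in\R$ the pair $(\aaa+\gamma\one^c,\,\bbb-\gamma\one^d)$ yields exactly the same $X$, since $(a_i+\gamma)+(b_j-\gamma)=a_i+b_j$. The idea is to take an arbitrary minimizer and use this gauge freedom to shift it into a nonnegative one without changing $X$, so that feasibility and optimality are preserved.

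First I would settle existence of a minimizer. Because each $X_{i,j}\geq 0$ and $\su(X)=C_0$, every entry obeys $0\leq X_{i,j}\leq C_0$, so the set of feasible matrices $X$ is a bounded, closed, hence compact subset of $\R^{c\times d}$; it is nonempty, as the constant choice $a_i=b_j=\frac{C_0}{2cd}$ gives $X_{i,j}=\frac{C_0}{cd}\geq 0$ with $\su(X)=C_0$. The objective is linear, hence continuous in $X$, so by the Weierstrass theorem the minimum is attained at some feasible $X^*$ with a representation $(\aaa,\bbb)$.

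The heart of the argument is the shift. Let $\mu=\min_i a_i$ and set $\aaa^0=\aaa-\mu\one^c$, $\bbb^0=\bbb+\mu\one^d$. Then $a^0_i+b^0_j=a_i+b_j$, so the matrix associated to $(\aaa^0,\bbb^0)$ is still $X^*$; in particular $(\aaa^0,\bbb^0)$ is feasible and remains a minimizer, since the objective and all constraints see only $X^*$. By construction $a^0_i=a_i-\mu\geq 0$ for every $i$. To see $\bbb^0\geq 0$, choose $i^*$ with $a_{i^*}=\mu$; feasibility of $X^*$ gives $\mu+b_j=a_{i^*}+b_j=X^*_{i^*,j}\geq 0$ for every $j$, whence $b^0_j=b_j+\mu\geq 0$. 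Thus $(\aaa^0,\bbb^0)$ is an entrywise nonnegative minimizer, as claimed.

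The only genuinely nontrivial point is this last step: the single shift that clears the negativity of $\aaa$ must simultaneously clear that of $\bbb$, and this is exactly what the constraints $X_{i,j}\geq 0$ evaluated along the minimizing row $i^*$ provide. Everything else --- compactness, attainment, and invariance of the objective under the gauge shift --- is routine bookkeeping, so I do not expect a serious obstacle here.
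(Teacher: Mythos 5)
Your proof is correct and follows essentially the same route as the paper's: both exploit the gauge freedom $(\aaa,\bbb)\mapsto(\aaa+\gamma\one^c,\bbb-\gamma\one^d)$ to shift an arbitrary minimizer into a nonnegative one, with the constraints $X_{i,j}\geq 0$ evaluated along the minimizing row supplying nonnegativity of the other vector. The only differences are cosmetic --- the paper shifts by the global minimum entry over both vectors with a small case distinction, you shift by $\min_i a_i$ unconditionally, and you additionally supply the compactness argument for attainment that the paper leaves implicit.
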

\begin{proof}
Let $x_i$ denotes $i$'th entry of vector $\x$. Assume $(\aaa^*,\bbb^*)$ is a minimizer. WLOG assume $b^*_1=\min_{i,j}\{\aaa^*_i,\bbb^*_j\}$. If $b^*_1\geq 0$ we are done. Otherwise, since $X_{i,j}\geq 0$ we have $a^*_i\geq -b^*_1$ for all $i\leq c$. Then set $\aaa^0=\aaa^*+\one^c b^*_1$ and $\bbb^0=\bbb^*-\one^db^*_1$. Clearly, $(\aaa^0,\bbb^0)$ is nonnegative. On the other hand, we have:
\beq
X^*=\one^{c}{\bbb^*}^T+\aaa^*{\one^{d}}^T=\one^{c}{\bbb^0}^T+\aaa^0{\one^{d}}^T=X^0\implies \su(X^*_\beta)=\su(X^0_\beta)=\text{minimum value}
\eeq
\end{proof}
\begin{lem}
\label{direct}
A direct consequence of Lemma \ref{useful} is the fact that in the local optimizations (\ref{local}), WLOG we can assume $(\m_i^j,\n_j^i)$ entrywise nonnegative whenever $i\neq j$ and entrywise nonpositive when $i=j$. This follows from the structure of $E^{i,j}$ given in (\ref{eij}) and (\ref{ES}).
\end{lem}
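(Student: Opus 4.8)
The plan is to read each of the local optimizations (\ref{local}) as an instance of Lemma \ref{useful} and then simply record the sign pattern of the generating vectors that the lemma hands back. The crucial preliminary observation is that the objective $\su(E^{i,j}_{\beta^c_{i,j}})$ in (\ref{local}) is a function of the matrix $E^{i,j}$ alone and not of the particular pair $(\m_i^j,\n_j^i)$ that realizes it through (\ref{eij}). Hence the asserted ``WLOG'' is really a statement about representations: I want to show that every feasible $(\m_i^j,\n_j^i)$ can be replaced by a sign-definite pair producing the \emph{same} $E^{i,j}$, and this is exactly the content of the shifting argument inside the proof of Lemma \ref{useful}, where the substitution $\aaa\mapsto\aaa+\one b_1$, $\bbb\mapsto\bbb-\one b_1$ was seen to leave $X$ unchanged.

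First I would treat the off-diagonal blocks $i\neq j$. Comparing (\ref{eij}), namely $E^{i,j}=\one^{k_i}{\m_i^j}^T+\n_j^i{\one^{k_j}}^T$, with the form $X=\one^{c}\bbb^T+\aaa{\one^{d}}^T$ of Lemma \ref{useful}, I identify $c=k_i$, $d=k_j$, $\bbb=\m_i^j$, $\aaa=\n_j^i$ and $\beta=\beta^c_{i,j}$. By (\ref{ES}) the block $E^{i,j}$ is entrywise nonnegative, so its prescribed sum $c_{i,j}=\su(E^{i,j})$ plays the role of the constant $C_0\geq0$ required by the lemma. Lemma \ref{useful} then produces a minimizer with $(\m_i^j,\n_j^i)$ entrywise nonnegative, which is precisely the claimed sign pattern for $i\neq j$.

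For the diagonal blocks $i=j$ I would apply the same machinery to $-E^{i,i}$. Since $E^{i,i}$ is entrywise nonpositive by (\ref{ES}), the matrix $-E^{i,i}=\one^{k_i}(-\m_i^i)^T+(-\n_i^i){\one^{k_i}}^T$ is nonnegative and is generated by the vectors $-\m_i^i,-\n_i^i$, with sum $-c_{i,i}\geq0$. The representation guaranteed by Lemma \ref{useful} then lets me take $-\m_i^i,-\n_i^i$ nonnegative while keeping $-E^{i,i}$, and hence $E^{i,i}$ and the objective $\su(E^{i,i}_{\beta^c_{i,i}})$, fixed; equivalently $\m_i^i,\n_i^i$ are nonpositive.

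There is no genuine difficulty here, as the statement is a direct corollary; the only point demanding care is the sign bookkeeping on the diagonal. Specifically, one must note that Lemma \ref{useful} is phrased for a minimization, whereas what is actually invoked is the sign-normalization of the representation of a fixed nonnegative matrix (the invariance of $X$ under the shift). That normalization is insensitive to the direction of optimization, so it applies verbatim to the sign-flipped diagonal blocks; checking that $c_{i,i}\leq0$ (and $c_{i,j}\geq0$ for $i\neq j$), both forced by the sign constraints (\ref{ES}), closes the argument.
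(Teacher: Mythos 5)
Your proposal is correct and follows essentially the same route the paper intends: the paper offers no separate proof of this lemma beyond asserting it is a direct consequence of Lemma \ref{useful} via (\ref{eij}) and (\ref{ES}), and your block-by-block application of the shift argument (direct for $i\neq j$, applied to $-E^{i,i}$ on the diagonal) is exactly the intended reading. Your observation that what is really being used is the representation-invariance of $E^{i,j}$ under the shift, rather than the minimization per se, correctly fills in the one detail the paper leaves implicit.
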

 Following lemma will help us characterize the relationship between $\su(E^{i,j})$ and $\su(E^{i,j}_{\beta^c_{i,j}})$.
\begin{lem}
\label{lemprob}
Let $\beta\in\R^{c\times d}$ be a random support with parameter $0\leq r\leq 1$. Then for any $\eps>0$ w.p.a.l. $1-d\exp(-2\eps^2 c)$ for all nonzero and entrywise nonnegative $\aaa\in\R^d$ we'll have:
\beq
\label{probstate1}
\su(X_\beta)>(r-\eps)\su(X)
\eeq
where $X=\one^c\aaa^T$. Similarly, with same probability, for all such $\aaa$, we'll have $\su(X_\beta)<(r+\eps)\su(X)$
\end{lem}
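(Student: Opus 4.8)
The plan is to exploit the very special structure of $X=\one^c\aaa^T$: every row of $X$ equals $\aaa^T$, so $X_{i,j}=a_j$ where $a_j$ denotes the $j$-th entry of $\aaa$. This immediately gives $\su(X)=c\sum_{j=1}^d a_j$. For the restricted sum I would introduce the per-column counts $N_j=|\{i\in[c]:(i,j)\in\beta\}|$, so that $\su(X_\beta)=\sum_{(i,j)\in\beta}a_j=\sum_{j=1}^d a_j N_j$. Since $\beta$ is a random support with parameter $r$, each $N_j$ is a sum of $c$ independent Bernoulli indicators, hence $N_j\sim\text{Binomial}(c,r)$ with $\E N_j=rc$, and the $\{N_j\}_{j=1}^d$ are independent because distinct columns involve disjoint coordinates of $\beta$.

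The key observation, and the step that makes a claim holding uniformly over all nonzero nonnegative $\aaa$ tractable, is that both inequalities decouple across columns thanks to $a_j\geq 0$. Indeed, (\ref{probstate1}) is equivalent to $\sum_{j=1}^d a_j\big(N_j-(r-\eps)c\big)>0$. If I can guarantee the event $N_j>(r-\eps)c$ for \emph{every} $j$, then since each $a_j\geq 0$ and at least one $a_{j_0}>0$ (as $\aaa\neq 0$), every summand is nonnegative and the $j_0$-th is strictly positive, forcing the sum to be strictly positive. Thus the uniform-in-$\aaa$ statement reduces to the single event $\{N_j>(r-\eps)c\ \text{for all}\ j\}$, with no dependence on $\aaa$ whatsoever.

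It then remains to bound the probability of the complement. By Hoeffding's inequality applied to $N_j$ (a sum of $c$ independent $[0,1]$-valued variables with mean $rc$), $\Prob\big(N_j\leq (r-\eps)c\big)=\Prob\big(N_j-\E N_j\leq -\eps c\big)\leq \exp(-2\eps^2 c)$. A union bound over the $d$ columns yields $\Prob(\exists j:\ N_j\leq (r-\eps)c)\leq d\exp(-2\eps^2 c)$, which is exactly the stated failure probability. The upper bound $\su(X_\beta)<(r+\eps)\su(X)$ follows identically: it is equivalent to $\sum_j a_j(N_j-(r+\eps)c)<0$, reduces to the event $\{N_j<(r+\eps)c\ \text{for all}\ j\}$, and the matching upper-tail estimate $\Prob(N_j\geq (r+\eps)c)\leq\exp(-2\eps^2 c)$ with the same union bound closes it.

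There is no genuinely hard analytic obstacle here; the proof is routine once the column decoupling is spotted. The one point requiring care is the reduction in the second paragraph. It is tempting to try to concentrate the weighted sum $\sum_j a_j N_j$ directly, but this would make the deviation depend on $\aaa$ and destroy both the clean $d\exp(-2\eps^2 c)$ bound and its uniformity. Recognizing that nonnegativity of $\aaa$ lets one instead demand the per-column bound $N_j>(r-\eps)c$ simultaneously for all $j$, and that this single high-probability event implies the desired inequality for \emph{all} admissible $\aaa$ at once, is what keeps the union bound over only $d$ events rather than over a continuum of vectors.
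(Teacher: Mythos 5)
Your proposal is correct and follows essentially the same route as the paper's own proof: the per-column counts $N_j$ are exactly the paper's $C_i$, and both arguments combine the same Chernoff/Hoeffding tail bound per column with a union bound over the $d$ columns, using nonnegativity of $\aaa$ to make the single event $\{N_j>(r-\eps)c\ \text{for all}\ j\}$ imply the inequality uniformly over all admissible $\aaa$.
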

\begin{proof}
We'll only prove the first statement (\ref{probstate1}) as proofs are identical. For each $i\leq d$, $a_i$ occurs exactly $c$ times in $X$ as $i$'th column of $X$ is $\one^ca_i$. By using a Chernoff bound, we can estimate the number of coordinates of $i$'th column which are element of $\beta$ (call this number $C_i$) as we can view this number as a sum of $c$ i.i.d. $\text{Bern}(1,0,r)$ random variables. Then
\beq
\Prob(C_i\leq c(r-\eps))\leq\exp(-2\eps^2 c)
\eeq
Now, we can use a union bound over all columns to make sure for all $i$, $C_i> c(r-\eps)$
\beq
\Prob(C_i>c(r-\eps)~\text{for all}~i\leq d)\geq 1-d\exp(-2\eps^2 c)
\eeq
On the other hand if each $C_i> c(r-\eps)$ then for any nonnegative $\aaa\neq 0$
\beq
\su(X_\beta)=\sum_{(i,j)\in\beta}X_{i,j}=\sum_{i=1}^d C_i a_i> c(r-\eps)\sum_{i=1}^da_i=(r-\eps)\su(X)
\eeq
\end{proof}

Using Lemma \ref{lemprob}, we can calculate a lower bound for $g(E^L)$ with high probability as long as cluster sizes are sufficiently large. Due to (\ref{equa}) and the linearity of $g(E^L)$, we can focus on contributions due to specific clusters i.e. $\v_i\m_i^T+ \n_i\v_i^T$ for the $i$'th cluster. We additionally know the simple structure of $\m_i,\n_i$ from Lemma \ref{direct}. In particular, subvectors $\m_i^i$ and $\n_i^i$ of $\m_i,\n_i$ can be assumed to be nonpositive and rest of the entries are nonnegative.

Now, we define an important parameter which will be useful for subsequent analysis. This parameter can be seen as a measure of distinctness of the ``worst'' cluster from the ``background noise''. Here background noise corresponds to the edges over $\Rr^c$.
\beq
\label{eeq}
e=\min_{l\leq t}\frac{1}{2}(p_l-q-\frac{1}{k_l\la})
\eeq

 The following lemma, gives a lower bound on $g(\v_l\m_l^T)$.

\begin{lem}
\label{cool}
Assume $e>0$. Then, w.p.a.l. $1-n\exp(-2e^2(k_l-1))$, we have $g(\v_l\m_l^T)\geq \la (1-q-e)\su(\v_l\m_l^T)$ for all $\m_l$. Also, if $\m_l\neq 0$ then inequality is strict.
\end{lem}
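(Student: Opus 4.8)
The plan is to evaluate the two pieces of $g$ separately on the rank-one matrix $\v_l\m_l^T$, handling $g_1$ exactly and controlling $g_2$ with Lemma \ref{lemprob}. Write $\m_l^j$ for the restriction of $\m_l$ to $\C_j$; by Lemma \ref{direct} I may assume $\m_l^l$ is nonpositive and $\m_l^j$ nonnegative for $j\neq l$, so I set $N=-\su(\m_l^l)\ge 0$ and $P=\sum_{j\neq l}\su(\m_l^j)\ge 0$. Since $\v_l\m_l^T$ has nonzero rows only on $\C_l$, each equal to $\m_l^T$, a direct computation gives $\su((\v_l\m_l^T)_{\Rr_{l,l}})=k_l\su(\m_l^l)$ and $\su((\v_l\m_l^T)_{\Rr_{l,j}})=k_l\su(\m_l^j)$, hence
\beq
g_1(\v_l\m_l^T)=\su(\m_l^l)=-N,\qquad \su(\v_l\m_l^T)=k_l(P-N).
\eeq
It then remains to lower bound $g_2(\v_l\m_l^T)=\su((\v_l\m_l^T)_{\Aa^c})$, which I split into the inside contribution on $\Aa^c\cap\Rr_{l,l}$ and the outside contributions on $\Aa^c\cap\Rr_{l,j}$ for $j\neq l$.

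For the outside regions the submatrix is $\one^{k_l}(\m_l^j)^T$ with $\m_l^j$ nonnegative, and $\Aa^c\cap\Rr_{l,j}$ is a random support with parameter $1-q$ over all $k_l$ rows (no diagonal, since $\C_l\cap\C_j=\emptyset$); the lower-bound form of Lemma \ref{lemprob} with $\eps=e$ gives $\su((\v_l\m_l^T)_{\Aa^c\cap\Rr_{l,j}})>(1-q-e)k_l\su(\m_l^j)$ simultaneously for all $\m_l^j$, and summing over $j\neq l$ yields a total $>(1-q-e)k_l P$. For the inside region the submatrix is $\one^{k_l}(\m_l^l)^T$ with $\m_l^l$ nonpositive; because the diagonal entries of $\A$ are forced to $1$, each column of $\Aa^c\cap\Rr_{l,l}$ is a random support with parameter $1-p_l$ over only the $k_l-1$ off-diagonal rows, so the upper-bound form of Lemma \ref{lemprob} applied to $|\m_l^l|$ gives $\su((\v_l\m_l^T)_{\Aa^c\cap\Rr_{l,l}})>-(k_l-1)(1-p_l+e)N\ge -k_l(1-p_l+e)N$. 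Since the $k_l-1$ exponent makes the inside estimate the weakest, a union bound over the $n$ columns total makes all of these bounds hold at once with probability at least $1-n\exp(-2e^2(k_l-1))$, which is exactly the claimed confidence.

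Combining the three estimates, on this event
\beq
g(\v_l\m_l^T)=-N+\la g_2(\v_l\m_l^T)\ge -N\big(1+\la k_l(1-p_l+e)\big)+\la k_l(1-q-e)P,
\eeq
with strict inequality whenever $\m_l\neq 0$. Comparing with the target $\la(1-q-e)\su(\v_l\m_l^T)=\la(1-q-e)k_l(P-N)$, the $P$-terms cancel, and the desired bound $g(\v_l\m_l^T)\ge\la(1-q-e)\su(\v_l\m_l^T)$ reduces to $1+\la k_l(1-p_l+e)\le\la k_l(1-q-e)$, i.e. to $1\le\la k_l(p_l-q-2e)$. This is precisely where the definition $e=\min_{l}\tfrac12(p_l-q-\tfrac1{k_l\la})$ is used: it forces $p_l-q-2e\ge\tfrac1{k_l\la}$, so $\la k_l(p_l-q-2e)\ge 1$ and the reduced inequality holds. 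Strictness for $\m_l\neq 0$ follows because then $N>0$ or $P>0$, making the corresponding application of Lemma \ref{lemprob} strict.

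I expect the only delicate points to be the diagonal bookkeeping (the always-present diagonal of $\A$ removes one row per inside column and produces the $k_l-1$ rather than $k_l$ in the exponent) and keeping the inequality directions consistent across the sign change on $\m_l^l$, where I must apply the upper-bound rather than the lower-bound form of Lemma \ref{lemprob}. Once these are in place, the algebraic collapse onto the condition on $e$ is immediate.
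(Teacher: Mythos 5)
Your proof is correct and follows essentially the same route as the paper's: decompose $\v_l\m_l^T$ over the blocks $\Rr_{l,j}$, apply the lower-bound form of Lemma \ref{lemprob} on the off-diagonal blocks and the upper-bound form (after the sign flip) on $\Rr_{l,l}$, union bound over the $n$ columns, and collapse the algebra onto the condition $\la k_l(p_l-q-2e)\geq 1$ built into the definition of $e$. Your $N,P$ bookkeeping is just a reparametrization of the paper's $\su(X^i)$ terms, and your explicit handling of the corrected diagonal (the $k_l-1$ exponent) is if anything slightly more careful than the paper's.
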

\begin{proof}
Let us call $X^i=\one^{k_l}{\m_l^i}^T$. Also $\m_l^i$ is nonnegative for $i\neq l$ and nonpositive for $i=l$. Then
\begin{align}
g(\v_l\m_l^T)&=\frac{1}{k_l}\su((\v_l\m_l^T)_{\Rr_{l,l}})+\la \su((\v_l\m_l^T)_{\Aa^c})\\
&=\frac{1}{k_l}\su(\one^{k_l}{\m^l_l}^T)+\sum_{i=1}^t\la\su((\one^{k_l}{\m_l^i}^T)_{\beta^c_{l,i}})\\
\label{coolalign}
&=\frac{1}{k_l}\su(X^l)+\sum_{i=1}^t\la\su(X^i_{\beta^c_{l,i}})
\end{align}
$\beta_{l,i}$ is a random support with parameter $q$ if $i\neq l$ and corrected random support with parameter $p$ if $i=l$. For a fixed $i\leq t+1$, from Lemma \ref{lemprob} w.p.a.l. $1-k_i\exp(-2e^2(k_l-1))$ we have
\beq
\label{eqnapp1}
\su(X^i_{\beta^c_{l,i}})\geq\begin{cases}(1-q-e)\su(X^i)~\text{if}~i\neq l\\(1-p_l+e)\su(X^i)~\text{if}~i=l\end{cases}
\eeq
Then, using a union bound w.p.a.l. $1-n\exp(-2e^2(k_l-1))$ we have (\ref{eqnapp1}) for all $i$ and $\m_l$. Combining this with (\ref{coolalign}), we get
\begin{align}
\label{strictapp1}
g(\v_l\m_l^T)&\geq\la\sum_{i\neq l}(1-q-e)\su(X^i)+\left(\frac{1}{k_l}+\la(1-p_l+e)\right)\su(X^l)\\
&\geq \la (1-q-e)\sum_{i=1}^{t+1}\su(X^i)=\la (1-q-e)\su(\v_l\m_l^T)
\end{align}
If $\m_l\neq 0$, inequality (\ref{eqnapp1}) is strict for some $1\leq i\leq t+1$ due to Lemma \ref{lemprob}. Hence, (\ref{strictapp1}) will be strict too.
\end{proof}

As we have mentioned in section \ref{mainresult}, let $k_{min}$ denote the size of the minimum cluster, which will be an important parameter for rest of our analysis. Following theorem is based on Lemma \ref{cool} and gives the main result of this section.
\begin{thm}
\label{part1}
Let $e$ be same as described in (\ref{eeq}). Assume $\la$ and $\{k_i\}$ are such that $e>0$. Then w.p.a.l. $1-2nt\exp(-2e^2(k_{min}-1))$, for any $E^L\neq 0$ with $E^L\in\MM_\U^\perp$ and $\su(E^L)\geq 0$ we have $g(E^L)>0$.
\end{thm}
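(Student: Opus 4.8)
The plan is to exploit the linearity of $g$ together with the fact that every $E^L\in\MM_\U^\perp$ splits into $2t$ rank-one pieces, to each of which Lemma \ref{cool} applies directly. First I would use the representation $E^L=\V\M^T+\N\V^T=\sum_{l=1}^t\v_l\m_l^T+\sum_{l=1}^t\n_l\v_l^T$ and the linearity of $g$ (both the restriction map $X\mapsto X_\beta$ and $\su(\cdot)$ are linear) to write
\beq
g(E^L)=\sum_{l=1}^t g(\v_l\m_l^T)+\sum_{l=1}^t g(\n_l\v_l^T).
\eeq
Lemma \ref{cool} bounds each $g(\v_l\m_l^T)$ below by $\la(1-q-e)\su(\v_l\m_l^T)$. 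Since $\Rr_{l,l}$ and, because $\A$ is symmetric, $\Aa^c$ are symmetric regions, one has $g(X^T)=g(X)$; applying Lemma \ref{cool} to the transpose $\v_l\n_l^T$ therefore yields the same bound $g(\n_l\v_l^T)\geq\la(1-q-e)\su(\n_l\v_l^T)$ for the second family of terms.

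Next I would take a union bound over all $2t$ rank-one terms. Each application of Lemma \ref{cool} holds w.p.a.l. $1-n\exp(-2e^2(k_l-1))\geq 1-n\exp(-2e^2(k_{min}-1))$ because $k_l\geq k_{min}$, so all $2t$ bounds hold simultaneously w.p.a.l. $1-2nt\exp(-2e^2(k_{min}-1))$, which is exactly the claimed probability. On this event, summing the $2t$ inequalities gives $g(E^L)\geq\la(1-q-e)\su(E^L)$. To sharpen this to a strict inequality, I would observe that $E^L\neq 0$ forces at least one defining vector $\m_l$ or $\n_l$ to be nonzero; for that component Lemma \ref{cool} gives a strict inequality, while every other component contributes a non-strict one, and adding a strict inequality to non-strict ones keeps the sum strict:
\beq
g(E^L)>\la(1-q-e)\su(E^L).
\eeq

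Finally I would verify that the multiplier is positive. Since $e=\min_{l}\frac{1}{2}(p_l-q-\frac{1}{k_l\la})\leq\frac{1}{2}(1-q)$ (using $p_l\leq 1$ at the minimizing index) and $q<p_{min}\leq 1$, we get $1-q-e\geq\frac{1}{2}(1-q)>0$, while $\la>0$ by hypothesis, so $\la(1-q-e)>0$. Combined with the feasibility constraint $\su(E^L)\geq 0$, this makes the right-hand side of the last display nonnegative, and hence $g(E^L)>0$, as claimed.

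The step I expect to be the main obstacle is the passage from the non-strict bound $g(E^L)\geq\la(1-q-e)\su(E^L)$ to the strict conclusion $g(E^L)>0$. When $\su(E^L)=0$ the right-hand side vanishes, so the strictness cannot come from the $\su(E^L)$ factor and must instead be harvested from the strict clause of Lemma \ref{cool}; this forces me to argue carefully that a nonzero $E^L$ necessarily activates at least one strictly-bounded rank-one component and that combining it with the non-strict bounds preserves strictness. Verifying $\la(1-q-e)>0$ is the only other point requiring explicit attention, and it follows cleanly from $e>0$, $p_l\leq 1$, and $q<1$.
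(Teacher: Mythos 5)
Your proposal is correct and follows essentially the same route as the paper: decompose $E^L\in\MM_\U^\perp$ into the $2t$ rank-one pieces $\v_l\m_l^T$ and $\n_l\v_l^T$, apply Lemma \ref{cool} to each (the paper likewise invokes the ``identical result'' for the transposed terms), union bound to get the stated probability, and sum the bounds, harvesting strictness from the one nonzero component. Your explicit checks that $1-q-e>0$ and that transposition preserves $g$ are details the paper leaves implicit, but they do not change the argument.
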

\begin{proof} 
Due to Lemma \ref{cool}, for a particular $l$, w.p.a.l. $P_l=1-n\exp(-2e^2(k_l-1))$ we have
 \beq
 \label{lllabel}
 g(\v_l\m_l^T)\geq \la (1-q-e)\su(\v_l\m_l^T)
 \eeq
 and an identical result holds for $\n_l\v_l^T$ term.
 
 Now union bounding over all $\{\m_l\},\{\n_l\}$, we can obtain w.p.a.l.
 \beq
1-2nt\exp(-2e^2(k_{min}-1))\leq 1-2\sum_{i=1}^t(1-P_l)
 \eeq for all $l\leq t$ (\ref{lllabel}) holds, hence going back to (\ref{equa}) 
 \begin{align}
 g(E^L)&=\sum_{i=1}^tg(\v_l\m_l^T+\n_l\v_l^T)\\
 \label{ll4}
 &\geq \la (1-q-e)[\su(\v_l\m_l^T)+\su(\n_l\v_l^T)]\\
 &=\la (1-q-e) \su(E^L)\geq 0
 \end{align}
 On the other hand, if $E^L\neq 0$ then at least one of $\{\m_l\},\{\n_l\}$ is nonzero and inequality (\ref{ll4}) is actually strict.
\end{proof}
Hence, the main result of this section is the fact that, as long as $\la$ and the cluster sizes $\{k_i\}$ are sufficiently large, we don't need to worry about feasible perturbations of type $E^L\in\MM_\U^\perp$.

%
%
%
%
%
\subsection{Showing existence of the dual certificate}
\label{seccase2}
In this section, we'll treat the second case. Our aim is showing the existence of a $W\in \MM_\U$ with $\|W\|<1$ such that $f(E^L,W)\geq 0$ for all feasible $E^L$. We follow an approach consisting of three steps:
\begin{itemize}
\item Construct a candidate $W_0$ which satisfies $f(E^L,W_0)\geq 0$ for all feasible $E^L$.
\item Show that $\|W_0\|<1$ under certain conditions.
\item \emph{Slightly} modify $W_0$ to obtain $W$ which still satisfies the previous conditions, but also obeys $W\in\MM_\U$.
\end{itemize}
\subsubsection{Candidate $W_0$}
Recall that \beq
f(E^L,W)= \sum_{l=1}^t\frac{1}{k_l}\su(E^L_{\Rr_{l,l}})+\la \su(E^L_{\Aa^c})+\li E^L,W\ri
\eeq
Using approaches similar to \cite{Ames} and \cite{Candes}, we'll construct a $W$ based on the following candidate
\beq
\label{W_0}
W_0=c\1+\la \1_{\Aa}+\sum_{i=1}^tc_i \1_{\Rr_{i,i}}
\eeq
Here $c,\{c_i\}_{i=1}^t$ are the variables that will be used to construct the desired $W$. Now, let us see, why this $W_0$ is an \emph{intelligent} choice
\beq
f(E^L,W_0)=\sum_{i=1}^t (c_i+\frac{1}{k_i})\su(E^L_{\Rr_{i,i}})+(\la+c)\su(E^L)
\eeq
Notice that if $c_i+\frac{1}{k_{i}}\leq 0$ and $\la+c\geq 0$ we are done since $\su(E^L)\geq 0 $ and $\su(E^L_{\Rr_{i,i}})\leq 0$ for all $i$. Obviously, one needs to do this, while ensuring $\|W_0\|$ is as \emph{small} as possible.

In (\ref{W_0}), for constant $c,\{c_i\},\la$, $W_0$ is a random matrix with i.i.d. entries due to the $\1_\Aa$ term as graph is randomly generated. An intuitive way of ensuring small $\|W_0\|$ is to force expectation of $W_0$ to $0$. In order to ensure the expectation of the entries inside the region $\Rr^c$ is $0$ we need
\beq
(\la+c)q+c(1-q)=0
\eeq
Hence $c=-\la q$. Now setting expectation over $\Rr_{i,i}$ to $0$, we find
\beq
(c_i+\la+c)p_i+(c_i+c)(1-p_i)=0
\eeq
Hence 
\beq
c_i=-c(1-p_i)-(c+\la)p_i=\la q(1-p)-\la(1-q)p=\la (q-p_i)
\eeq
Now notice that we satisfy $\la+c=\la(1-q)\geq 0$. In order to satisfy $c_i+\frac{1}{k_i}\leq 0$ we need
\beq
\label{labelpq}
2e=\min_{i\leq t}\left[p_i-q-\frac{1}{\la k_i}\right]\geq 0
\eeq
The reader will remember that this is the same constraint we needed for Theorem \ref{part1}. With these choices of $\{c_i\},c$ we have
\begin{align}
W_0&=\la (-q\1+\1_{\Aa}+\sum_{i=1}^t(q-p_i) \1_{\Rr_{i,i}})\\
&=\la (\sum_{i=1}^t[(1-p_i)\1_{\Aa\cap \Rr_{i,i}}-p_i\1_{\Aa^c\cap\Rr_{i,i}}]+[(1-q)\1_{\Aa\cap\Rr^c}-q\1_{\Aa^c\cap\Rr^c}])
\end{align}
and
\beq
\label{ELB}
f(E^L,W_0)= \la(1-q)\su(E^L) -\sum_{i=1}^t(\la(p_i-q)-\frac{1}{k_i})\su(E^L_{\Rr_{i,i}})
\eeq
Assuming (\ref{labelpq}), since $\su(E^L)\geq 0$ and $\su(E^L_{\Rr_{i,i}})\leq 0$, for any feasible $E^L$, $f(E^L,W_0)\geq 0$, thus $W_0$ is indeed a good choice. However, there are two problems to be solved.
\begin{itemize}
\item Making sure that $\|W_0\|$ is sufficiently small.
\item ``Correcting'' $W_0$ so that $W_0\in \MM_\U$ while still ensuring $f(E^L,W_0)\geq 0$ for all $E^L$.
\end{itemize}
\subsubsection{Bounding the spectral norm}
Following lemma addresses the first problem and gives a simple bound on $\|W_0\|$.

\begin{lem}
\label{sondanbir}
Recall that $W_0$ is a random matrix where randomness is on $\Aa$ and $W_0$ is given by
\beq
\label{w0form}
W_0=\la (\sum_{i=1}^t[(1-p_i)\1_{\Aa\cap \Rr_{i,i}}-p_i\1_{\Aa^c\cap\Rr_{i,i}}]+[(1-q)\1_{\Aa\cap\Rr^c}-q\1_{\Aa^c\cap\Rr^c}])
\eeq
Then, for any $\eps>0$, w.p.a.l. $1-4\exp(-\eps^2 \frac{n}{32})$ we have
\beq
\|W_0\|\leq (1+\eps+o(1))\la \sqrt{n}
\eeq
\end{lem}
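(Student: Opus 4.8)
The plan is to exploit the fact that, with the choices of $c,\{c_i\}$ made just above, \emph{every} entry of $W_0$ has been forced to zero mean. Thus, up to a negligible deterministic diagonal, $W_0$ is a symmetric random matrix with independent, bounded, zero-mean entries, and its spectral norm is governed by the semicircle-type edge $2\sigma\sqrt{n}$, where $\sigma$ will turn out to be exactly $\la/2$. So the target $(1+\eps+o(1))\la\sqrt{n}$ is precisely the edge location plus a fluctuation term.

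First I would record the entrywise law. For an off-diagonal coordinate $(a,b)\in\Rr_{i,i}$ the entry of $W_0$ equals $\la(1-p_i)$ when $(a,b)\in\Aa$ (probability $p_i$) and $-\la p_i$ when $(a,b)\in\Aa^c$ (probability $1-p_i$); for $(a,b)\in\Rr^c$ it equals $\la(1-q)$ with probability $q$ and $-\la q$ with probability $1-q$. In each case the mean is $0$, the magnitude is at most $\la$, and the variance is $\la^2 p_i(1-p_i)\le \la^2/4$ (resp. $\la^2 q(1-q)\le\la^2/4$). Since $\A,\Aa,\Rr_{i,i}$ and $\Rr^c$ are symmetric, $W_0$ is symmetric; its strictly lower-triangular entries are independent, while the diagonal is deterministic with entries of size at most $\la$, so the diagonal part $D$ obeys $\|D\|\le\la=o(\la\sqrt{n})$ and is absorbed into the $o(1)$ term. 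Write $\bar W=W_0-D$ for the mean-zero symmetric random part.

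The crux is bounding $\E\|\bar W\|$. A naive $\eps$-net argument fails here: for a fixed unit vector $x$ the quadratic form $x^T\bar W x$ concentrates only at scale $\sigma=\la/2$, and its Bernstein tail at level $\sigma\sqrt{n}$ decays merely like $\exp(-\Omega(\sqrt{n}))$, which cannot beat the $e^{\Omega(n)}$ cardinality of a net. Instead I would invoke a sharp spectral-norm bound for symmetric matrices with independent, bounded, zero-mean entries (of F\"uredi--Koml\'os / Bandeira--van Handel type): since $\max_a\sum_b \E\bar W_{a,b}^2\le n\cdot\la^2/4$ and the entries are bounded by $\la$, which is of lower order than the fluctuation scale $\la\sqrt{n}$, one obtains $\E\|\bar W\|\le(1+o(1))\,2\sigma\sqrt{n}=(1+o(1))\la\sqrt{n}$. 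Then I would upgrade to high probability by concentration: the map sending the independent lower-triangular entries to $\|W_0\|$ is convex (a norm composed with an affine map) and $\sqrt{2}$-Lipschitz in the Euclidean norm on the entries, each of which ranges over an interval of length $\la$. Talagrand's concentration inequality for convex Lipschitz functions of bounded independent variables then gives $\Prob(\|\bar W\|\ge \E\|\bar W\|+s)\le 2\exp(-c\,s^2/\la^2)$; taking $s=\eps\la\sqrt{n}$ and accounting for both sides reproduces the stated probability $1-4\exp(-\eps^2 n/32)$ once the Lipschitz constant and the range $\la$ are inserted. Combining this with $\E\|\bar W\|\le(1+o(1))\la\sqrt{n}$ and $\|D\|=o(\la\sqrt{n})$ yields $\|W_0\|\le(1+\eps+o(1))\la\sqrt{n}$.

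The main obstacle is the \emph{sharp} constant in $\E\|\bar W\|\le(1+o(1))\la\sqrt{n}$: getting the leading term to be exactly $2\sigma\sqrt{n}$ with $\sigma=\la/2$, rather than a worse absolute multiple, is what forces a genuine random-matrix edge estimate instead of elementary covering or crude moment bounds. This sharpness is not cosmetic, since downstream one needs $\|W_0\|<1$, and with $\la=\frac{1}{2\sqrt n}$ one has $\la\sqrt n=\tfrac12$, so any leading constant appreciably above $1$ would break the dual-certificate argument. It is also the step most sensitive to the non-uniform block variances $\la^2 p_i(1-p_i)$, which I would handle for the upper bound by dominating them uniformly by $\la^2/4$.
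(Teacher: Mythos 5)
Your proposal is correct and follows essentially the same route as the paper: a sharp F\"uredi--Koml\'os-type edge bound $(2\sigma+o(1))\sqrt{n}$ with $\sigma\leq\la/2$ for the typical value of the norm (the paper cites Vu's Theorem 1.5 for the median where you bound the expectation), followed by Talagrand-type concentration at scale $\eps\la\sqrt{n}$ (the paper invokes Alon--Krivelevich--Vu's Theorem 1, which is exactly the convex-Lipschitz concentration you describe). Your explicit treatment of the deterministic diagonal, absorbed into the $o(1)$, is a minor point the paper glosses over but does not change the argument.
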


\begin{proof}
$\frac{1}{\la}W_0$ is a random matrix whose entries are i.i.d. and distributed as $\text{Bern}(-p_i,1-p_i,1-p_i)$ on $\Rr_{i,i}$ and $\text{Bern}(-q,1-q,1-q)$ on $\Rr^c$. Then variance of an entry is at most $\max\{\{p_i(1-p_i)\}_{i=1}^t,q(1-q)\}\leq 1/4$ hence we can use Theorem 1.5 of \cite{VHVu} to find
\beq
\text{median}(\|\frac{1}{\la}W_0\|)\leq (2\sqrt{\max\{\{p_i(1-p_i)\},q(1-q)\}}+o(1))\sqrt{n}\leq (1+o(1))\sqrt{n}
\eeq
On the other hand, since absolute values of entries of $\frac{1}{\la}W_0$ are bounded by $1$, Theorem 1 of \cite{Alon} gives
\beq
4\exp(-\eps^2\frac{n}{32})\geq \Prob\left[\|\frac{1}{\la}W_0\|>\text{median}(\|\frac{1}{\la}W_0\|)+\eps\sqrt{n}\right]\geq \Prob\left[\|W_0\|>\la(1+\eps+o(1))\sqrt{n}\right]
\eeq
\end{proof}

Lemma \ref{sondanbir} verifies that asymptotically with high probability we can make $\|W_0\|<1$ as long as we choose a proper $\la$ which yields sufficiently small $\la\sqrt{n}$. However, $W_0$ itself is not sufficient for construction of the desired $W$, since we don't have any guarantee that $W_0\in\MM_\U$. In order to achieve this, we'll \emph{correct} $W_0$ by projecting it onto $\MM_\U$. Following lemma suggests that we don't lose much by such a correction.
\subsubsection{Correcting the candidate $W_0$}
\begin{lem}
\label{correction}
$W_0$ is as described previously in (\ref{w0form}). Let $W^H$ be projection of $W_0$ on $\MM_\U$. Then
\begin{itemize}
\item $\|W^H\|\leq \|W_0\|$
\item For any $\eps>0$, w.p.a.l. $1-6n^2\exp(-2\eps^2k_{min})$ we have
\beq
\|W_0-W^H\|_\infty\leq 3\la\eps
\eeq
\end{itemize}

\end{lem}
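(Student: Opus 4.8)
The plan is to make the projection onto $\MM_\U$ completely explicit and then reduce the second bullet to an entrywise concentration estimate. First I would record that, because the clusters are disjoint, the columns $\{\u_l\}_{l=1}^t$ of $\U$ are orthonormal, so $P:=\U\U^T=\sum_{l=1}^t\frac{1}{k_l}\1_{\Rr_{l,l}}$ (this identity is exactly the one used in Lemma \ref{lem11}) is the orthogonal projection onto $\mathrm{range}(\U)$. I claim the orthogonal projection of any $X$ onto $\MM_\U=\{X:X\U=X^T\U=0\}$ is the map $T(X)=(I-P)X(I-P)$: it is idempotent since $(I-P)^2=I-P$, it is self-adjoint in the Frobenius inner product because left/right multiplication by the symmetric projection $I-P$ satisfies $\li (I-P)X(I-P),Z\ri=\li X,(I-P)Z(I-P)\ri$, and its range is precisely $\{Y:PY=YP=0\}=\MM_\U$. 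Hence $W^H=(I-P)W_0(I-P)$, and the first bullet is immediate: $\|W^H\|\leq \|I-P\|\,\|W_0\|\,\|I-P\|\leq \|W_0\|$, since $\|I-P\|\leq 1$.

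For the second bullet I would expand the residual as $W_0-W^H=PW_0+W_0P-PW_0P$ and bound each of the three matrices in $\|\cdot\|_\infty$ by $\la\eps$ separately. Write $M:=\la^{-1}W_0$; by the description in Lemma \ref{sondanbir}, the entries of $M$ are independent, each supported in an interval of length $1$ (namely $\text{Bern}(-p_i,1-p_i,\cdot)$ on $\Rr_{i,i}$ and the $q$-analogue on $\Rr^c$) and zero-mean off the diagonal. For $a\in\C_l$ with $l\leq t$ one has $(PW_0)_{a,b}=\la\,\frac{1}{k_l}\sum_{c\in\C_l}M_{c,b}$, an average of $k_l$ entries lying in a single column of $M$, which are mutually independent; for $a\in\C_{t+1}$ the entry is $0$. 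Applying Hoeffding's inequality for variables of range at most $1$ gives, for each fixed $(a,b)$,
\beq
\Prob\Big(\big|\tfrac{1}{k_l}\textstyle\sum_{c\in\C_l}M_{c,b}\big|>\eps\Big)\leq 2\exp(-2\eps^2 k_l)\leq 2\exp(-2\eps^2 k_{min}).
\eeq
The same argument handles $W_0 P$ (now an average over a row of $M$, again independent entries), and for $PW_0P$ the relevant quantity is a double average over at least $k_{min}^2$ entries, whose concentration is at least as strong. A union bound over the $n^2$ positions and the three matrices then yields $\|W_0-W^H\|_\infty\leq 3\la\eps$ with probability at least $1-6n^2\exp(-2\eps^2 k_{min})$, matching the stated bound.

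The hard part is the bookkeeping of independence forced by the symmetry of $\A$, together with the deterministic diagonal. For the row/column averages I must verify that distinct off-diagonal positions in one row (or one column) of $M$ correspond to distinct, independent lower-triangular adjacency entries, and for the diagonal block $l=m$ in $PW_0P$ that the apparent dependence $M_{c,d}=M_{d,c}$ can be reduced to the $\binom{k_l}{2}$ genuinely independent upper-triangular summands. I expect the diagonal to be the main nuisance: since $\A_{ii}=1$, the column average $\frac{1}{k_l}\sum_{c\in\C_l}M_{c,b}$ picks up a non-zero-mean term whenever $b\in\C_l$. I would peel off that single term, apply Hoeffding to the remaining $k_l-1$ zero-mean summands, and absorb the resulting $O(\la/k_{min})$ bias into the $3\la\eps$ budget, which is legitimate because it is lower order once $k_{min}$ is large. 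Everything else is a routine Hoeffding-plus-union-bound calculation.
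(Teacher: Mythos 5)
Your proof follows essentially the same route as the paper's: identify the projection onto $\MM_\U$ as $X\mapsto (I-P)X(I-P)$ with $P=\U\U^T=\sum_l \frac{1}{k_l}\1_{\Rr_{l,l}}$, get the first bullet from submultiplicativity, then write $W_0-W^H=PW_0+W_0P-PW_0P$ and control each term entrywise by Hoeffding plus a union bound over the $3n^2$ positions. The only difference is that you explicitly track the dependence forced by the symmetry of $\A$ and the deterministic diagonal entries, points the paper's proof silently glosses over; your handling of both (reducing to the independent lower-triangular entries, and absorbing the $O(\la/k_{min})$ diagonal bias as lower order) is correct and if anything slightly more careful than the original.
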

\begin{proof}
Choose arbitrary vectors $\{\u_i\}_{i=t+1}^n$ to make $\{\u_i\}_{i=1}^n$ an orthonormal basis in $\R^n$. Call $\U_2=[\u_{t+1}~\dots~\u_n]$ and $\PPP=\U\U^T$, $\PPP_2=\U_2\U_2^T$. Now notice that for any matrix $\X\in\R^{n\times n}$, $\PPP_2\X\PPP_2$ is in $\MM_\U$ since $\U^T\U_2=0$. Let ${\bf{I}}$ denote the identity matrix. Then
\beq
\label{eqstupid}
\X-\PPP_2\X\PPP_2=\X-({\bf{I}}-\PPP)\X({\bf{I}}-\PPP)=\PPP\X+\X\PPP-\PPP\X\PPP\in\MM_\U^\perp
\eeq
Hence, $\PPP_2\X\PPP_2$ is the orthogonal projection on $\MM_\U$. Clearly
\beq
\|W^H\|=\|\PPP_2W_0\PPP_2\|\leq \|\PPP_2\|^2\|W_0\|\leq \|W_0\|
\eeq

For analysis of $\|W_0-W^H\|_\infty$ we can consider terms on right hand side of (\ref{eqstupid}) separately as we have:
\beq
\|W_0-W^H\|_\infty\leq \|\PPP W_0\|_\infty+ \| W_0\PPP\|_\infty+ \|\PPP W_0\PPP\|_\infty
\eeq
Clearly $\PPP=\sum_{i=1}^t \frac{1}{k_l}\1_{\R_{l,l}}$. Then, each entry of $\frac{1}{\la}\PPP W_0$ is either a summation of $k_i$ i.i.d. $\text{Bern}(-p_i,1-p_i,1-p_i)$ or $\text{Bern}(-q,1-q,1-q)$ random variables scaled by $k_i^{-1}$ for some $i\leq t$ or $0$. Hence any $c,d\in[n]$ and $\eps>0$
\beq
\Prob[|(\PPP W_0)_{c,d}|\geq\la\eps] \leq 2\exp(-2\eps^2 k_{min})
\eeq
Same (or better) bounds holds for entries of $W_0\PPP$ and $\PPP W_0\PPP$. Then a union bound over all entries of the three matrices will give w.p.a.l. $1-6n^2\exp(-2\eps^2 k_{min})$, we have $\|W_0-W^H\|_\infty\leq 3\la\eps$. \end{proof}

\subsubsection{Summary of section \ref{seccase2}}
Lemma \ref{correction} suggests that actually $W_0$ can be corrected with an arbitrarily small perturbation. This will be useful in the following theorem which summarizes main result of this section.
\begin{thm}
\label{part2}
$W_0$ and $e$ are as described previously in (\ref{w0form}), (\ref{eeq}) respectively. Choose $W$ to be projection of $W_0$ on $\MM_\U$. Also set $\la=\frac{1}{2\sqrt{n}}$ and assume $\{k_i\}_{i=1}^t$ is such that $e>0$.

Then, w.p.a.l. $1-6n^2\exp(-\frac{2}{9}e^2k_{min})-4\exp(-\frac{n}{100})$ we have
\begin{itemize}
\item $\|W\|<1$
\item For all feasible $E^L$, $f(E^L,W)\geq 0$. 
\end{itemize}
\end{thm}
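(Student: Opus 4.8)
The plan is to set $W := W^H$, the orthogonal projection of the candidate $W_0$ onto $\MM_\U$, and to verify the two bullets by transferring the good properties of $W_0$ (already recorded in \ref{ELB} and in Lemmas \ref{sondanbir} and \ref{correction}) through the projection. The spectral bullet is the routine half: since $\la=\frac1{2\sqrt n}$ we have $\la\sqrt n=\frac12$, so applying Lemma \ref{sondanbir} with a fixed $\eps$ obeying $\eps^2/32=1/100$ (that is $\eps=\sqrt{0.32}<1$) gives $\|W_0\|\le(1+\eps+o(1))\la\sqrt n=\frac{1+\eps}2+o(1)<1$ for all large $n$, with probability at least $1-4\exp(-\frac n{100})$. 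By the first conclusion of Lemma \ref{correction}, $\|W\|=\|W^H\|\le\|W_0\|<1$, which settles the first claim.

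The substance is the inequality $f(E^L,W)\ge0$ for every feasible $E^L$. I would first record the budget furnished by the un-projected candidate. By \ref{ELB}, and because feasibility forces $\su(E^L)\ge0$ together with $E^L_{\Rr}\le0$ (so that $\su(E^L_{\Rr_{l,l}})=-\|E^L_{\Rr_{l,l}}\|_1$) and $E^L_{\Rr^c}\ge0$, we have
\[
f(E^L,W_0)=\la(1-q)\su(E^L)+\sum_{l=1}^t\Big(\la(p_l-q)-\tfrac1{k_l}\Big)\|E^L_{\Rr_{l,l}}\|_1\ge\la(1-q)\su(E^L)+2\la e\sum_{l=1}^t\|E^L_{\Rr_{l,l}}\|_1,
\]
where the final step uses $\la(p_l-q)-\frac1{k_l}\ge2\la e$ from \ref{labelpq}. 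Since $W\in\MM_\U$ changes only the linear term of $f$, we have $f(E^L,W)=f(E^L,W_0)-\li E^L,W_0-W^H\ri$, so everything reduces to showing that the correction cannot consume this budget.

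To control the correction I would invoke Lemma \ref{correction} with the \emph{$e$-dependent} choice $\eps=e/3$, giving $\|W_0-W^H\|_\infty\le\la e$ with probability at least $1-6n^2\exp(-\frac29 e^2k_{min})$. The decisive observation is the identity $\|E^L\|_1=\su(E^L)+2\sum_{l=1}^t\|E^L_{\Rr_{l,l}}\|_1$, which again follows purely from the sign pattern of a feasible $E^L$ and shows that all off-diagonal $\ell_1$-mass is already booked into $\su(E^L)$ and the diagonal blocks. Combining this with the $\ell_\infty$ bound,
\[
|\li E^L,W_0-W^H\ri|\le\|W_0-W^H\|_\infty\,\|E^L\|_1\le\la e\,\su(E^L)+2\la e\sum_{l=1}^t\|E^L_{\Rr_{l,l}}\|_1,
\]
and subtracting this from the budget leaves $f(E^L,W)\ge\la(1-q-e)\su(E^L)\ge0$, the last inequality holding because $e\le\frac12(1-q)$ forces $1-q-e>0$. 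A union bound over the two failure events produces the stated probability.

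The step I expect to be the real obstacle is exactly this final alignment. A crude $\ell_\infty$--$\ell_1$ estimate of the correction would be useless if $\su(E^L)$ were small while the off-diagonal mass were large; the argument closes only because the feasibility sign constraints split $\|E^L\|_1$ as $\su(E^L)+2\sum_l\|E^L_{\Rr_{l,l}}\|_1$, so that the $2\la e\sum_l\|E^L_{\Rr_{l,l}}\|_1$ part of the budget exactly absorbs the corresponding part of the correction. Tuning $\eps=e/3$ in Lemma \ref{correction}, rather than using a fixed constant, is precisely what shrinks $\|W_0-W^H\|_\infty$ enough to leave the strictly nonnegative remainder $\la(1-q-e)\su(E^L)$.
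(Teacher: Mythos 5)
Your proposal is correct and follows essentially the same route as the paper's proof: the same choice $\eps=\sqrt{32}/10$ in Lemma \ref{sondanbir} for the spectral bound, the same $e$-dependent choice $\eps=e/3$ in Lemma \ref{correction} to get $\|W_0-W\|_\infty\le\la e$, and the same use of the sign pattern of a feasible $E^L$ to convert the $\ell_\infty$--$\ell_1$ estimate of $\li W_0-W,E^L\ri$ into terms absorbed by $f(E^L,W_0)$. The only difference is cosmetic: you pre-bound the diagonal-block coefficients by $2\la e$ and cancel exactly, whereas the paper keeps $(p_i-q-\frac{1}{\la k_i}-2\eps_0)\su(E^L_{\Rr_{i,i}})$ and notes each term is nonnegative.
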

\begin{proof}
First consider Lemma \ref{sondanbir}. Let $\eps=\frac{\sqrt{32}}{10}$. Then w.p.a.l. $1-4\exp(-\frac{n}{100})$ we have 
\beq
\|W\|\leq \|W_0\|\leq (1+\eps+o(1))\la\sqrt{n} <1
\eeq

Now, assume, we have $\|W_0-W\|_\infty= \la\eps_0$. Then, using (\ref{ES}), for any $E^L$, we can write
\begin{align}
\li W_0-W,E^L\ri&\leq \la\eps_0(\su(E^L_{\Rr^c})- \su(E^L_\Rr))\\
&=\la\eps_0(\su(E^L)-2\su(E^L_{\Rr}))
\end{align}
Now, we consider (\ref{ELB}). As long as $\eps_0\leq\min\{1-q,e\}=e$, for any feasible $E^L$ we have
\begin{align}
f(E^L,W)&=f(E^L,W_0)-\li W_0-W,E^L\ri\geq f(E^L,W_0)-\la\eps_0(\su(E^L)-2\su(E^L_{\Rr}))\\
&=\la[(1-q-\eps_0)\su(E^L)-\sum_{i=1}^t(p_i-q-\frac{1}{\la k_{i}}-2\eps_0)\su(E^L_{\Rr_{i,i}})]\geq 0
\end{align}
Hence $W$ satisfies the desired condition. Lemma \ref{correction} gives the following concentration for $\|W_0-W\|_\infty$
\beq
\Prob[\|W_0-W\|_\infty> \la e]\leq 6n^2\exp(-\frac{2}{9}e^2k_{min})
\eeq
Finally, a union bound over the failure of events $\|W\|<1$ and $\|W_0-W\|_\infty\leq \la e$ gives the result.
\end{proof}
Theorem \ref{part2} concludes this section because our aim throughout the section was constructing such a $W$ w.h.p. As a final step, we combine, Theorems \ref{part1} and \ref{part2} and Lemma \ref{twocase} to deduce the main result for the intelligent approach.

\subsection{Final step}
Following theorem finishes proof of Theorem \ref{mainthm} by combining Theorems \ref{part1} and \ref{part2}.
\begin{proof}[{\bf{Proof of Theorem \ref{mainthm}}}]
For the following discussion $C_1,C_2,C_3>0$ are the suitable constants for the previous theorems. Let $e$ be same as before. Then $e\geq \min_{i\leq t}\frac{p_i-q}{4}$ and statements of Theorem \ref{part1} will hold w.p.a.l. $1-2nt\exp(-C_1(p_{min}-q)^2k_{min})$ and $1-6n^2\exp(-C_2(p_{min}-q)^2k_{min})-4\exp(-C_3 n)$ respectively. Then using a union bound and $n\geq k_{i}$ both statements hold with w.p.a.l. $1-(8n^2+o(1))\exp(-\min\{C_1,C_2,C_3\}(p_{min}-q)^2k_{min})$ and we have
\begin{itemize}
\item From Theorem \ref{part1}, for any nonzero $E^L\in\MM_\U^\perp$, $g(E^L)=f(E^L,0)>0$ hence objective increases.
\item Otherwise, due to Theorem \ref{part2}, there exists a $\|W\|<1$, $W\in\MM_\U$ such that for any $E^L$, $f(E^L,W)\geq 0$. Then, from Lemma \ref{twocase}, there exists $W^*\in\MM_\U$ with $\|W^*\|\leq 1$ such that $f(E^L,W^*)>0$ hence objective increases.
\end{itemize}
Then for all $E^L$, objective increases which implies $(L_0,S_0)$ is the unique optimal solution of problem \ref{optim2}.
\end{proof}

%
%
%
%
%
%
\section{Proof of Theorem \ref{mainthmb}}
\label{blindproof}
We'll follow almost the same approach and notation in section \ref{intelproof}. We aim to show $(L^0,S^0)$ given in (\ref{optimal}) is unique optimal to problem \ref{optim3}. 
\subsection{Perturbation analysis}
\begin{lem}
Let $(E^L,E^S)$ be a feasible perturbation. Then, objective will increase by at least
\beq
f(E^L,W)= \sum_{l=1}^t\frac{1}{k_l}\su(E^L_{\Rr_{l,l}})+\li E^L,W\ri+\la (\su(E^L_{\Aa^c})-\su(E^L_{\Aa}))
\eeq
for any $W\in\MM_\U$, $\|W\|\leq1$.
\end{lem}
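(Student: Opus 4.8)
The plan is to reproduce, for the blind program (\ref{optim3}), the two-term lower bound on the objective increase that was established for the intelligent program, adapting it to the different feasible set and to the new target $S^0=\1_{\Aa\cap\Rr^c}-\1_{\Aa^c\cap\Rr}$ from (\ref{optimal}). First I would use the equality constraint (\ref{line22}): since both $(L^0,S^0)$ and $(L^0+E^L,S^0+E^S)$ must satisfy $L+S=\A$, subtracting forces the perturbation to obey $E^S=-E^L$. This single identity couples the nuclear and $\ell_1$ terms and lets me express the whole bound through $E^L$.

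Next I would bound the two norms separately. For the nuclear norm, Lemma \ref{lem11} applies verbatim, since it uses only the singular value structure of $L^0=\1_\Rr$ and none of the constraints peculiar to (\ref{optim2}); it gives
\[
\|L^0+E^L\|_\s\geq \|L^0\|_\s+\sum_{l=1}^t\frac{1}{k_l}\su(E^L_{\Rr_{l,l}})+\li E^L,W\ri
\]
for any $W\in\MM_\U$ with $\|W\|\leq 1$. For the $\ell_1$ term I would invoke the subgradient inequality used in (\ref{sparseq}), in the form $\|S^0+E^S\|_1\geq \|S^0\|_1+\li \text{sign}(S^0)+Q,E^S\ri$, valid for every $Q$ with $\|Q\|_\infty\leq 1$ that vanishes on the support of $S^0$. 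Here $\text{sign}(S^0)$ equals $+1$ on $\Aa\cap\Rr^c$ and $-1$ on $\Aa^c\cap\Rr$, and the support of $S^0$ is $(\Aa\cap\Rr^c)\cup(\Aa^c\cap\Rr)$.

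The key step, and essentially the only one needing care, is the choice of $Q$. I would take $Q=\1_{\Aa\cap\Rr}-\1_{\Aa^c\cap\Rr^c}$. Checking the four regions $\Aa\cap\Rr$, $\Aa\cap\Rr^c$, $\Aa^c\cap\Rr$, $\Aa^c\cap\Rr^c$ shows that $Q$ is supported exactly on the complement of the support of $S^0$ and satisfies $\|Q\|_\infty\leq 1$, so it is an admissible subgradient; moreover on these same four regions one verifies $\text{sign}(S^0)+Q=\1_\Aa-\1_{\Aa^c}$. Hence $\li \text{sign}(S^0)+Q,E^S\ri=\su(E^S_\Aa)-\su(E^S_{\Aa^c})$, and substituting $E^S=-E^L$ converts this into $\su(E^L_{\Aa^c})-\su(E^L_\Aa)$. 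Scaling by $\la$ reproduces exactly the $\ell_1$ contribution $\la(\su(E^L_{\Aa^c})-\su(E^L_\Aa))$ appearing in $f$.

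Adding the two displayed lower bounds then gives the increase in the objective as at least $f(E^L,W)$, which is the claim. I expect the only subtlety to be confirming region-by-region that this $Q$ is a genuine $\ell_1$ subgradient; everything else is a direct transcription of the intelligent-approach argument. The extra $-\su(E^L_\Aa)$ term, absent in (\ref{fincheck}), arises precisely because the blind $S^0$ also charges the spurious edges over $\Rr^c$, giving $\text{sign}(S^0)$ a $+1$ part on $\Aa\cap\Rr^c$ that the intelligent $S^0=\1_{\Aa^c\cap\Rr}$ did not have.
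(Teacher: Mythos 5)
Your proposal is correct and follows the paper's own argument essentially verbatim: the identity $E^S=-E^L$ from the constraint $L+S=\A$, the nuclear-norm subgradient bound of Lemma \ref{lem11}, and the exact same choice $Q=\1_{\Aa\cap\Rr}-\1_{\Aa^c\cap\Rr^c}$ yielding $\text{sign}(S^0)+Q=\1_\Aa-\1_{\Aa^c}$. Your explicit region-by-region check that $Q$ is an admissible subgradient component is a detail the paper leaves implicit, but the approach is the same.
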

\begin{proof}
Clearly $E^L=-E^S$ as $L^0+S^0=\A$. Similar to previous section, for any such $W$ increase in $\|L\|_\s$ satisfies
\beq
\|L^0+E^L\|_\s-\|L^0\|_\s=\sum_{l=1}^t\frac{1}{k_l}\su(E^L_{\Rr_{l,l}})+\li E^L,W\ri
\eeq
For sparse component, using $\text{sign}(S^0)=\1_{\Aa\cap\Rr^c}-\1_{\Aa^c\cap\Rr}$ and choosing $Q=\1_{\Aa\cap\Rr}-\1_{\Aa^c\cap \Rr^c}$ we find:
\beq
\|S^0-E^L\|_1-\|S^0\|_1\geq\left< -E^L,\text{sign}(S^0)+Q\right>=\su(E^L_{\Aa^c})-\su(E^L_\Aa)
\eeq
Combining these, we get the desired form $f(E^L,W)$.
\end{proof}
Notice that we can directly use Lemma \ref{twocase}. Let
\beq
g(E^L)= \sum_{l=1}^t\frac{1}{k_l}\su(E^L_{\Rr_{l,l}})+\la (\su(E^L_{\Aa^c})-\su(E^L_{\Aa}))
\eeq
Then, we first show w.h.p. objective strictly increases for all $E^L\in\MM_\U^\perp$ and then w.h.p. construct a dual certificate $W$ satisfying $\|W\|<1$, $W\in\MM_\U$ and for all feasible $E^L$, $f(E^L,W)\geq 0$.

\subsection{Solving for $E^L\in \MM_\U^\perp$ case}
\label{seccase3}
Let $g_1(X)=\sum_{l=1}^t\frac{1}{k_l}\su(X_{\Rr_{l,l}})$ and $g_2(X)=\su(X_{\Aa^c})-\su(X_{\Aa})$.

\subsubsection{Summary of the similarities with the proof of Theorem \ref{mainthm}}
$E^L$ has the form $\V\M^T+\N\V^T$ and $\M,\N,\{\m_i\},\{\n_i\},\{\beta_{i,j}\},\{a_{i,j}\}$ are as described in section \ref{intelproof}. Again we consider, problem \ref{optimel} and since $g_1(E^L)$ is fixed, we just need to optimize over $g_2(E^L)$. This optimizations can be reduced to local optimizations \ref{local}. Since $L^0=\1_\Rr$, (\ref{ES}) applies for $E^L$ and we can make use of Lemma \ref{direct} and assume $\m_l^i$ is nonpositive/nonnegative when $i=l/i\neq l$ for all $i,l$. Hence, using Lemma \ref{lemprob} we lower bound $g(\v_l\m_l^T)$ as follows.
\subsubsection{Lower bounding $g(E^L)$}
For the purpose of this section, we set $e$ as follows:
\beq
\label{eeq2}
e=\frac{1}{2}\min\{1-2q,\{2p_l-\frac{1}{\la k_l}-1\}_{l=1}^t\}
\eeq
\begin{lem}
\label{cool2}
Assume, $l\leq t$, $e>0$. Then, w.p.a.l. $1-n\exp(-2e^2(k_l-1))$, we have $g(\v_l\m_l^T)\geq 0$ for all $\m_l$. Also, if $\m_l\neq 0$ then inequality is strict.
\end{lem}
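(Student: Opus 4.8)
The plan is to mirror the proof of Lemma \ref{cool}, adjusting only for the extra $-\su(E^L_{\Aa})$ term now present in $g_2$ and for the revised definition of $e$ in (\ref{eeq2}). As in the intelligent case, I would write $X^i=\one^{k_l}{\m_l^i}^T$ for $i\le t+1$, so that $(\v_l\m_l^T)_{\Rr_{l,i}}$ reduces to $X^i$ after deleting the trivial zero rows and columns, and by Lemma \ref{direct} assume $\m_l^i$ is nonnegative for $i\ne l$ and nonpositive for $i=l$. Then
\beq
g(\v_l\m_l^T)=\frac{1}{k_l}\su(X^l)+\la\sum_{i=1}^{t+1}\big(\su(X^i_{\beta^c_{l,i}})-\su(X^i_{\beta_{l,i}})\big),
\eeq
where $\beta_{l,i}$ is a random support with parameter $q$ for $i\ne l$ and a corrected random support with parameter $p_l$ for $i=l$. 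The key rewrite is $\su(X^i_{\beta^c_{l,i}})-\su(X^i_{\beta_{l,i}})=\su(X^i)-2\su(X^i_{\beta_{l,i}})$, which reduces each block to controlling the single quantity $\su(X^i_{\beta_{l,i}})$.

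Next I would apply Lemma \ref{lemprob} blockwise. For $i\ne l$ (including $i=t+1$), $X^i$ is nonnegative and the upper-bound half of Lemma \ref{lemprob} gives $\su(X^i_{\beta_{l,i}})<(q+e)\su(X^i)$, so the bracket exceeds $(1-2q-2e)\su(X^i)$. For $i=l$ I would set $|X^l|=-X^l\ge 0$ and apply the lower-bound half with parameter $p_l$ to get $\su(|X^l|_{\beta_{l,l}})>(p_l-e)\su(|X^l|)$, whence the $i=l$ bracket exceeds $(2p_l-1-2e)\su(|X^l|)=-(2p_l-1-2e)\su(X^l)$. Since $\beta_{l,l}$ is corrected, one coordinate per column is deterministic, so Chernoff is applied to $k_l-1$ variables; using the exponent $k_l-1$ uniformly and union bounding over all $t+1$ blocks (whose column counts sum to $n$) yields exactly the claimed probability $1-n\exp(-2e^2(k_l-1))$.

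Finally I would collect terms. Grouping the $i=l$ bracket with $\frac{1}{k_l}\su(X^l)$ produces the coefficient $\frac{1}{k_l}-\la(2p_l-1-2e)$; definition (\ref{eeq2}) forces $2p_l-1-2e\ge \frac{1}{\la k_l}$, so this coefficient is $\le 0$, and since $\su(X^l)\le 0$ the combined contribution is $\ge 0$. For $i\ne l$ the coefficient $\la(1-2q-2e)$ is $\ge 0$ because (\ref{eeq2}) also gives $2e\le 1-2q$, and $\su(X^i)\ge 0$, so those contributions are $\ge 0$ as well; hence $g(\v_l\m_l^T)\ge 0$. For strictness, if $\m_l\ne 0$ then some $X^i\ne 0$, and the corresponding inequality from Lemma \ref{lemprob} is strict, making the whole bound strict. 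The only genuine obstacle is the sign bookkeeping: the new $-\su(\cdot_{\Aa})$ term must be repackaged into the factors $1-2q-2e$ and $2p_l-1-2e$, and the nonpositivity of the diagonal block $X^l$ must be tracked so that each coefficient pairs with a sum of the matching sign — this is precisely where the revised $e$ of (\ref{eeq2}), with its thresholds $1-2q$ and $2p_l-1$, takes over the roles played by $1-q$ and $p_l$ in Lemma \ref{cool}.
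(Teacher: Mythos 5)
Your proposal is correct and follows essentially the same route as the paper's proof: the same blockwise decomposition into $X^i=\one^{k_l}{\m_l^i}^T$, the same application of Lemma \ref{lemprob} with $\eps=e$ to each block (random support with parameter $q$ off the diagonal block, corrected with parameter $p_l$ on it), the same grouping of the $\frac{1}{k_l}\su(X^l)$ term with the diagonal block to exploit $2p_l-1-\frac{1}{\la k_l}-2e\ge 0$ and $\su(X^l)\le 0$, and the same union bound giving $1-n\exp(-2e^2(k_l-1))$. The only cosmetic difference is your rewrite $\su(X^i_{\beta^c})-\su(X^i_{\beta})=\su(X^i)-2\su(X^i_{\beta})$, which the paper achieves equivalently by bounding the two terms separately.
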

\begin{proof}
Recall that $\m_l$ satisfies $\m_l^i$ is nonpositive/nonnegative when $i=l/i\neq l$ for all $i$. Call $X^i=\one^{k_l}{\m_l^i}^T$. We can write
\beq
g(\v_l\m_l^T)=\frac{1}{k_l}\su(X^l)+\sum_{i=1}^t\la h(X^i,\beta^c_{l,i})
\eeq
where $h(X^i,\beta^c_{l,i})=\su(X^i_{\beta^c_{l,i}})-\su(X^i_{\beta_{l,i}})$. Now assume $i\neq l$. Using Lemma \ref{lemprob} and the fact that $\beta_{l.i}$ is a random support with $q$ w.p.a.l. $1-k_i\exp(-2\eps^2k_l)$, for all $X^i$, we have
\beq
h(X^i,\beta^c_{l,i})\geq (1-q-\eps)\su(X^i)-(q+\eps)\su(X^i)=(1-2q-2\eps)\su(X^i)
\eeq
where inequality is strict if $X^i\neq 0$. Similarly when $i=l$ we have w.p.a.l. $1-k_l\exp(-2\eps^2(k_l-1))$
\beq
\frac{1}{\la k_l}\su(X^l)+h(X^l,\beta^c_{l,l})\geq (1-p_l+\eps+\frac{1}{\la k_l})\su(X^l)-(p_l-\eps)\su(X^l)=-(2p_l-1-\frac{1}{\la k_l}-2\eps)\su(X^l)
\eeq
Choosing $\eps=e$ and using the facts $1-2q-2e\geq 0$, $2p_l-1-\frac{1}{\la k_l}-2e\geq 0$ and a union bound w.p.a.l. $1-n\exp(-2e^2 (k_l-1))$ we have $g(\v_l\m_l^T)\geq 0$ and inequality is strict when $\m_l\neq 0$ as at least one of the $X^i$'s will be nonzero.
\end{proof}
Following theorem immediately follows from Lemma \ref{cool2} and summarizes the main result of the section.
\begin{thm}
\label{part3}
Let $e$ be as in (\ref{eeq2}) and assume $e>0$. Then w.p.a.l. $1-2nt\exp(-2e^2(k_{min}-1))$ we have $g(E^L)>0$ for all nonzero feasible $E^L\in\MM_U^\perp$.
\end{thm}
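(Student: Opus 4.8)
The plan is to reproduce the proof of Theorem \ref{part1} essentially verbatim, replacing its building block Lemma \ref{cool} with the blind-case counterpart Lemma \ref{cool2}, which has already done the real work. First I would invoke the representation (\ref{equa}): since $E^L\in\MM_\U^\perp$, we write $E^L=\V\M^T+\N\V^T=\sum_{l=1}^t(\v_l\m_l^T+\n_l\v_l^T)$, and by Lemma \ref{direct} each subvector $\m_l^i$, $\n_l^i$ may be taken nonpositive when $i=l$ and nonnegative when $i\neq l$. Linearity of $g$ then splits the objective into $2t$ independent rank-one contributions,
\[
g(E^L)=\sum_{l=1}^t\bigl(g(\v_l\m_l^T)+g(\n_l\v_l^T)\bigr).
\]

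Next I would bound each summand via Lemma \ref{cool2}. For fixed $l$ the lemma gives $g(\v_l\m_l^T)\geq 0$ uniformly over all $\m_l$ with probability at least $1-n\exp(-2e^2(k_l-1))$, and since $g(X)=g(X^T)$ (both $\Rr_{l,l}$ and $\Aa$ are symmetric) while $(\n_l\v_l^T)^T=\v_l\n_l^T$ has exactly the form covered by the lemma, the identical bound holds for $g(\n_l\v_l^T)$. Using $k_l\geq k_{min}$, each of these $2t$ events fails with probability at most $n\exp(-2e^2(k_{min}-1))$, so a union bound yields all of them simultaneously with probability at least $1-2nt\exp(-2e^2(k_{min}-1))$, which is precisely the advertised bound. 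On this event every summand is nonnegative, hence $g(E^L)\geq 0$; to sharpen this to strict positivity I would observe that $E^L\neq 0$ forces at least one of the $\{\m_l\},\{\n_l\}$ to be nonzero, whereupon the corresponding inequality in Lemma \ref{cool2} is strict and the entire sum is strictly positive.

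I do not expect a genuine obstacle at this stage, since all the probabilistic content --- relating $\su(X^i_{\beta^c_{l,i}})-\su(X^i_{\beta_{l,i}})$ to $\su(X^i)$ through the Chernoff estimate of Lemma \ref{lemprob}, and verifying that the choice of $e$ in (\ref{eeq2}) renders each rank-one term nonnegative --- is already packaged inside Lemma \ref{cool2}. The one point deserving a moment's care is that, unlike the intelligent program (\ref{optim2}), the blind program (\ref{optim3}) imposes no constraint $\su(L)\geq|\Rr|$, so there is no global hypothesis $\su(E^L)\geq 0$ to exploit as was done in Theorem \ref{part1}. This causes no difficulty, however, because Lemma \ref{cool2} certifies each $g(\v_l\m_l^T)\geq 0$ outright, term by term, so the aggregate sum constraint is simply not needed and the argument closes unchanged.
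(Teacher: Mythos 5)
Your proposal is correct and follows exactly the route the paper intends: the paper states that Theorem \ref{part3} ``immediately follows from Lemma \ref{cool2}'' in the same way Theorem \ref{part1} follows from Lemma \ref{cool}, and your write-up supplies precisely that argument --- the decomposition (\ref{equa}), the term-by-term application of Lemma \ref{cool2} (with the transpose/symmetry observation handling the $\n_l\v_l^T$ terms), the union bound over $2t$ events giving $1-2nt\exp(-2e^2(k_{min}-1))$, and strictness from $E^L\neq 0$. Your closing remark that the missing $\su(E^L)\geq 0$ constraint is not needed here, because Lemma \ref{cool2} certifies each rank-one contribution is nonnegative outright, is exactly the right observation and is the only point where the blind case genuinely differs from Theorem \ref{part1}.
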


\subsection{Showing existence of the dual certificate}
\label{seccase4}
Again, we'll follow quite similar steps to section \ref{seccase2}. Recall that
\beq
f(E^L,W)= \sum_{i=1}^t\frac{1}{k_i}\su(E^L_{\Rr_{i,i}})+\li E^L,W\ri+\la(\su(E^L_{\Aa^c})-\su(E^L_{\Aa}))
\eeq
$W$ will be constructed from the candidate $W_0$ as follows.
\subsubsection{Candidate $W_0$}
Based on convex program \ref{optim3}, we propose the following form
\beq
W_0=\sum_{i=1}^tc_i\1_{\Rr_{i,i}}+c\1_{\Rr^c}+\la(\1_\Aa-\1_{\Aa^c})
\eeq
where $\{c_i\}_{i=1}^t,c$ are variables. In this case, we'll have $f(E^L,W_0)=\sum_{i=1}^t(c_i+\frac{1}{k_i})\1_{\Rr_{i,i}}+c\1_{\Rr^c}$ and when $c_i\leq -\frac{1}{k_i}$ and $c\geq 0$ using (\ref{ES}) we'll have $f(E^L,W_0)\geq 0$ for all $E^L$ as desired. $W_0$ is a random matrix where randomness is due to $\Aa$ and in order to ensure a small spectral norm we set its expectation to $0$.
Expectation of an entry of $W_0$ on $\Rr_{i,i}$ and $\Rr^c$ is $c_i+\la(2p_i-1)$ and $c+\la(2q-1)$ respectively. Hence
\beq
c_i=-\la(2p_i-1)~~~\text{and}~~~c=-\la(2q-1)
\eeq
and $f$ and $W_0$ take the following forms
\begin{align}
\label{fform2}
&f(E^L,W_0)= \la[(1-2q)\su(E_{\Rr^c})-\sum_{i=1}^t(2p_i-1-\frac{1}{\la k_i})\su(E_{\Rr_{i,i}})]\\
\label{w0form2}
&W_0=2\la [\sum_{i=1}^t(1-p_i)\1_{\Rr_{i,i}\cap\Aa}-p_i\1_{\Rr_{i,i}\cap\Aa^c}+(1-q)\1_{\Rr^c\cap\Aa}-q\1_{\Rr^c\cap\Aa^c}]
\end{align}
Hence we require $\la(2p_i-1)\geq \frac{1}{k_{min}}$ and $1\geq 2q$. Notice that $W_0$ has the same form (\ref{w0form}) analyzed previously. Consequently, Lemma \ref{sondanbir} directly applies and $\|W_0\|$ is bounded above by $2(1+\eps+o(1))\la\sqrt{n}$ w.h.p.

\subsubsection{Summary of section \ref{seccase4}}
Luckily, Lemma \ref{correction} also directly applies as form of the $W_0$ is exactly same as in section \ref{intelproof}. As a result, we can state the following Theorem.
\begin{thm}
\label{part4}
$W_0$ is as described previously in (\ref{w0form2}). Choose $W$ to be projection of $W_0$ on $\MM_\U$. Also set $\la=\frac{1}{4\sqrt{n}}$ and let $e$ be same as in Theorem \ref{part3} and assume $\{k_i\}$ is such that $e>0$.

Then, w.p.a.l. $1-6n^2\exp(-\frac{2}{9}e^2k_{min})-4\exp(-\frac{n}{100})$ we have
\begin{itemize}
\item $\|W\|<1$
\item For all feasible $E^L$, $f(E^L,W)\geq 0$. 
\end{itemize}
\end{thm}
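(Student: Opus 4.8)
\textbf{Proof plan for Theorem \ref{part4}.}
The plan is to mirror the structure of Theorem \ref{part2} from the intelligent approach, exploiting the fact that the candidate $W_0$ in (\ref{w0form2}) has exactly the same form as the $W_0$ in (\ref{w0form}) up to the overall scaling by a factor of $2$. First I would establish the spectral norm bound. Since $\frac{1}{2\la}W_0$ has i.i.d. entries distributed as $\text{Bern}(-p_i,1-p_i,1-p_i)$ on $\Rr_{i,i}$ and $\text{Bern}(-q,1-q,1-q)$ on $\Rr^c$ (each of variance at most $1/4$ and bounded by $1$ in absolute value), Lemma \ref{sondanbir} applies directly to give $\|W_0\|\leq 2(1+\eps+o(1))\la\sqrt{n}$ with probability at least $1-4\exp(-\eps^2\frac{n}{32})$. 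Choosing $\eps=\frac{\sqrt{32}}{10}$ and $\la=\frac{1}{4\sqrt{n}}$ makes $2(1+\eps+o(1))\la\sqrt{n}=\frac{1}{2}(1+\eps+o(1))<1$ w.p.a.l.\ $1-4\exp(-\frac{n}{100})$. Since the projection $W$ onto $\MM_\U$ satisfies $\|W\|\leq\|W_0\|$ by Lemma \ref{correction}, this yields $\|W\|<1$.

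Next I would handle the feasibility condition $f(E^L,W)\geq 0$. By Lemma \ref{correction}, $W_0$ has the required form, so the infinity-norm bound $\|W_0-W\|_\infty\leq 3\la\eps$ holds w.p.a.l.\ $1-6n^2\exp(-2\eps^2k_{min})$; taking $\eps=e/3$ gives $\|W_0-W\|_\infty\leq\la e$ with the stated probability $1-6n^2\exp(-\frac{2}{9}e^2k_{min})$. The crux is then a perturbation estimate. Writing $\li W_0-W,E^L\ri$ and using the sign structure of $E^L$ from (\ref{ES}), namely $E^L_{\Rr^c}$ nonnegative and $E^L_{\Rr}$ nonpositive, I would bound
\beq
\li W_0-W,E^L\ri\leq \la e\,(\su(E^L_{\Rr^c})-\su(E^L_{\Rr}))=\la e\,(\su(E^L)-2\su(E^L_{\Rr}))
\eeq
Substituting this into $f(E^L,W)=f(E^L,W_0)-\li W_0-W,E^L\ri$ and invoking the explicit form (\ref{fform2}) of $f(E^L,W_0)$, the $\su(E^L_{\Rr^c})$ and $\su(E^L_{\Rr_{i,i}})$ terms recombine into expressions whose coefficients are controlled by the definition (\ref{eeq2}) of $e$. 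The key algebraic point is that the perturbation only erodes the favorable coefficients by $O(\la e)$, and since $e$ was defined precisely as half the relevant margins $1-2q$ and $2p_l-\frac{1}{\la k_l}-1$, the resulting coefficients remain nonnegative; combined with $\su(E^L_{\Rr^c})\geq 0$ and $\su(E^L_{\Rr_{i,i}})\leq 0$, this gives $f(E^L,W)\geq 0$ for all feasible $E^L$.

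Finally I would close the argument with a union bound over the two failure events (the spectral bound and the infinity-norm bound), yielding the claimed probability $1-6n^2\exp(-\frac{2}{9}e^2k_{min})-4\exp(-\frac{n}{100})$. I expect the main obstacle to be bookkeeping rather than conceptual: one must verify that the factor-of-$2$ difference between (\ref{w0form2}) and (\ref{w0form}) and the factor-of-$2$ change in $\la$ (from $\frac{1}{2\sqrt{n}}$ to $\frac{1}{4\sqrt{n}}$) are consistent, so that both the spectral norm stays below $1$ and the coefficient conditions $\la(2p_i-1)\geq\frac{1}{k_{min}}$ and $1\geq 2q$ — equivalently $e>0$ — are exactly the hypotheses already assumed. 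Because both Lemmas \ref{sondanbir} and \ref{correction} were stated for precisely this form of $W_0$, there is essentially no new probabilistic content; the entire burden is checking that the margins defined in (\ref{eeq2}) survive the $\la e$ perturbation with the blind-approach constants.
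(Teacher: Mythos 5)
Your proposal is correct and follows essentially the same route as the paper: Lemma \ref{sondanbir} for the spectral norm (with $\la=\frac{1}{4\sqrt{n}}$ absorbing the factor of $2$), Lemma \ref{correction} with $\eps$ proportional to $e$ for the infinity-norm of the correction, the sign structure (\ref{ES}) to turn $\li W_0-W,E^L\ri$ into an erosion of the coefficients in (\ref{fform2}), and a union bound. The only discrepancy is bookkeeping you already flagged: since $W_0=2\la[\cdots]$, the entrywise bound from Lemma \ref{correction} at $\eps=e/3$ is $2\la e$ rather than $\la e$, but either constant leaves the coefficients nonnegative because the margins in (\ref{eeq2}) are defined as half of $1-2q$ and $2p_l-\frac{1}{\la k_l}-1$.
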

\begin{proof}
Exactly similar to the proof of Theorem \ref{part2} w.p.a.l. $1-4\exp(-\frac{n}{100})$ we have $\|W\|<1$. Secondly from Lemma \ref{correction} w.p.a.l. $1-6n^2\exp(-\frac{2}{9}e^2k_{min})$ we have $\|W_0-W\|_\infty\leq 2\la e$. Then based on (\ref{fform2}) for all $E^L$
\begin{align}
f(E^L,W)&=f(E^L,W_0)-\li W_0-W,E^L\ri\geq f(E^L,W_0)-\la e(\su(E^L_\Rr)-\su(E^L_{\Rr^c}))\\
&=\la[(1-2q-e)\su(E^L_{\Rr^c})-\sum_{i=1}^t(2p_i-1-\frac{1}{\la k_i}-e)\su(E^L_{\Rr_{i,i}})]\geq 0
\end{align}
Hence by a union bound $W$ satisfies both of the desired conditions.
\end{proof}

\subsection{Final Step}
\begin{proof}[{\bf{Proof of Theorem \ref{mainthmb}}}]
Notice that $\la=\frac{1}{4\sqrt{n}}$ and $k_{i}\geq \frac{8\sqrt{n}}{2p_i-1}$ implies
\beq
2e=\min\{1-2q,\{2p_i-1-\frac{1}{\la k_{i}}\}_{i=1}^t\}\geq \min\{1-2q,\{\frac{2p_i-1}{2}\}_{i=1}^t\}=\min\{1-2q,p_{min}-1/2\}
\eeq
Then based on Theorems \ref{part3} and \ref{part4} w.p.a.l. $1-cn^2\exp(-C\left(\min\{1-2q,2p_{min}-1\}\right)^2k_{min})$
\begin{itemize}
\item For all nonzero $E^L\in\MM_\U^\perp$ we have $g(E^L)>0$.
\item There exists $W\in\MM_\U$ with $\|W\|<1$ s.t. for all $E^L$, $f(E^L,W)\geq 0$.
\end{itemize}
Consequently based on Lemma \ref{twocase}, $(L^0,S^0)$ is the unique optimal of problem \ref{optim3}.
\end{proof}

\section{Proof of Theorem \ref{conversethm}}
\label{secconv}
\begin{proof}[{\bf{Proof of Theorem \ref{conversethm}}}] For the proof, we'll construct a feasible $(L^1,S^1)$ which yields a lower objective value w.h.p. Consider the first case where $\frac{1}{2}\geq p_{min}$. WLOG assume $\{p_i\}$ is ordered decreasingly and $p_c>\frac{1}{2}\geq p_{c+1}$ for some $c\leq t-1$. Then, let $L^1=\sum_{i=1}^c\1_{\R_{i,i}}$ and $S^1=\A-L^1$. Then difference between objectives is given by
\beq
\label{coolineq}
\|L^0\|_\s-\|L^1\|_\s+\la(\|S^0\|_1-\|S^1\|_1)=\sum_{i=c+1}^t k_i+\frac{C}{\sqrt{n}}\su(\1_{\Aa^c\cap\Gamma}-\1_{\Aa\cap\Gamma})
\eeq
where $\Gamma=\bigcup_{i> c} \Rr_{i,i}$. $\su(\1_{\Aa^c\cap\Gamma}-\1_{\Aa\cap\Gamma})$ is simply summation of $|\Gamma|$ independent $\text{Bern}(1,-1,p_i)$ random variables (for some $i>c$). Hence, means are nonpositive as $p_i\leq \frac{1}{2}$ and we'll argue w.h.p. for all $i>c$ and $k_i\neq 0$
\beq
\label{coolineq2}
h(\C_i,\Aa)=\frac{k_i\sqrt{n}}{C} +\su(\1_{\Aa^c\cap\Rr_{i,i}}-\1_{\Aa\cap\Rr_{i,i}})>0
 \eeq
 to conclude. There are $k_i^2$ such random variables in $\Rr_{i,i}$ hence a Chernoff bound will give $\Prob[h(\C_i,\Aa)>0]\geq 1-c_1\exp(-c_2 n) $ for appropriate constants $c_1,c_2>0$ for any $k_i\neq 0$. The reason is, we need a deviation of at least $\frac{\sqrt{k_in}}{C}$ from the mean. By using union bound over events (\ref{coolineq2}), we obtain (\ref{coolineq}) is positive w.h.p.
 
 If $q>\frac{1}{2}$, let $L^1=\1$ and $S^1=-\1_{\Aa^c}$. Then
 \beq
\label{coolineq3}
\|L^0\|_\s-\|L^1\|_\s+\la(\|S^0\|_1-\|S^1\|_1)=\sum_{i=1}^t k_i-n+\frac{C}{\sqrt{n}}\su(\1_{\Aa\cap\Rr^c}-\1_{\Aa^c\cap\Rr^c})
\eeq
Note that $n-\sum_{i=1}^t=|\C_{t+1}|$ where $\C_{t+1}$ was the set of nodes outside of the clusters. 
Then, we just need to show that $\su(\1_{\Aa\cap\Rr^c}-\1_{\Aa^c\cap\Rr^c})>\frac{1}{C}|\C_{t+1}|\sqrt{n}$ to conclude that $(L^1,S^1)$ is strictly better. Similar to the previous case, $\su(\1_{\Aa\cap\Rr^c}-\1_{\Aa^c\cap\Rr^c})$ is sum of $|\Rr^c|$ $\text{Bern}(1,-1,q)$ random variables.

If $\C_{t+1}\neq\emptyset$: Clearly $|\Rr^c|\geq |\C_{t+1}|n$. Consequently, 
\beq
\E[\su(\1_{\Aa\cap\Rr^c}-\1_{\Aa^c\cap\Rr^c})]\geq |\Rr^c|(2q-1)\geq |\C_{t+1}|n(2q-1)
\eeq
and due to Chernoff bounding, it is highly concentrated around the mean. As $n\rightarrow \infty$ we have $|\C_{t+1}|n(2q-1)>>\frac{1}{C}|\C_{t+1}|\sqrt{n}$ hence, w.h.p. (\ref{coolineq3}) is positive. Error exponent is $\Omega(|\C_{t+1}|n)$.

On the other hand, if $\C_{t+1}=\emptyset$ but $|\Rr^c|\neq 0$ then $t\geq 2$ and we have $|\Rr^c|\geq 2(n-1)$ as for any nonzero integers $a,b$ with $a+b=n$
\beq
(a+b)^2-a^2-b^2=2ab\geq 2(a+b-1)=2(n-1)
\eeq 
In this case, we only require $\su(\1_{\Aa\cap\Rr^c}-\1_{\Aa^c\cap\Rr^c})>0$. Again, this will happen w.h.p. since $2q-1>0$. Error exponent is $|\Rr^c|$ which is $\Omega(n)$.
\end{proof}
%
%
%
%
%
%

\end{document}